\def\ie{\emph{i.e.,\  }}
\def\eg{\emph{e.g.,\  }}
\providecommand{\norm}[1]{\left\|#1\right\|}
\providecommand{\ip}[1]{\boldsymbol{\langle}#1\boldsymbol{\rangle}}
\def\lamb{\boldsymbol{\lambda}}
\begin{document}

\title{GADMM: Fast and Communication Efficient Framework for Distributed Machine Learning}

      
      \author{\name Anis Elgabli \email anis.elgabli@oulu.fi 
      	\AND 
      	\name Jihong Park \email  jihong.park@oulu.fi
      	\AND
      	\name Amrit S. Bedi \email amritbd@iitk.ac.in
      	\AND
      	\name Mehdi Bennis \email mehdi.bennis@oulu.fi
      	\AND
      	\name Vaneet Aggarwal \email vaneet@purdue.edu
      }
\editor{} 
\maketitle

\begin{abstract}
	
When the data is distributed across multiple servers, lowering the communication cost between the servers (or workers) while solving the distributed learning problem is an important problem and is the focus of this paper. In particular, we propose a fast, and communication-efficient decentralized framework to solve the distributed machine learning (DML) problem. The proposed algorithm, Group Alternating Direction Method of Multipliers (GADMM) is based on the Alternating Direction Method of Multipliers (ADMM) framework. The key novelty in GADMM is that it solves the problem in a decentralized topology where at most half of the workers are competing for the limited communication resources at any given time. Moreover, each worker exchanges the locally trained model only with two neighboring workers, thereby training a global model with a lower amount of communication overhead in each exchange. We prove that GADMM converges to the optimal solution for convex loss functions, and numerically show that it converges faster and more communication-efficient than the state-of-the-art communication-efficient algorithms such as the Lazily Aggregated Gradient (LAG) and dual averaging, in linear and logistic regression tasks on synthetic and real datasets. Furthermore, we propose Dynamic GADMM (D-GADMM), a variant of GADMM, and prove its convergence under the time-varying network topology of the workers.
\end{abstract}

\section{Introduction}
\label{sec:intro}

Distributed optimization plays a pivotal role in distributed machine learning applications~\citep{ahmed2013distributed,dean2012large,li2013distributed,li2014communication} that commonly aims to minimize  {$\frac{1}{N}\!\sum_{n=1}^N f_n(\boldsymbol{\Theta})$}~with  $N$~workers. As illustrated in Fig. 1-(a), this problem is often solved by locally minimizing  {$f_n(\boldsymbol{\theta}_n)$} ~at each worker and globally averaging their model parameters  {$\boldsymbol{\theta}_n$}'s (and/or gradients) at a parameter server, thereby yielding the global model parameters~ {$\boldsymbol{\Theta}$} ~\citep{tsianos2012consensus}. Another way is to formulate the problem as an average consensus problem that minimizes  {$\frac{1}{N}\!\sum_{n=1}^N f_n(\boldsymbol{\theta}_n)$} ~under the constraint  {$\boldsymbol{\theta}_n\!=\!\boldsymbol{\Theta},\forall n$} which can be solved using dual decomposition or Alternating Direction Method of Multipliers (ADMM). ADMM is preferable since standard dual decomposition may fail in updating the variables in some cases. For example, if the objective function $f_n(\boldsymbol{\theta}_n)$ is a nonzero affine function of any component in the input parameter ${\boldsymbol \theta}_n$, then the ${\boldsymbol \theta}_n$-update fails, since the Lagrangian is unbounded from below in ${\boldsymbol \theta}_n$ for most choices of the dual variables \citep{boyd2011distributed}. However, using ADMM or dual decomposition, an existence of a central entity is necessary.

Such a centralized solution is, however, not capable of addressing a large network size exceeds the parameter server's coverage range. Even if the parameter server has a link to each worker, communication resources may become the bottleneck since, at every iteration, all workers need to transmit their updated models to the server before the server updates the global model and send it to the workers. Hence, as the number of workers increases, the uplink communication resources become the bottleneck. 
Because of this, we aim to develop a fast and communication-efficient decentralized algorithm, and propose \emph{Group Alternating Direction Method of Multipliers (GADMM)}. GADMM  solves the problem  {$\frac{1}{N}\!\sum_{n=1}^N f_n(\boldsymbol{\theta}_n)$}  subject to  {$\boldsymbol{\theta}_n\!=\!\boldsymbol{\theta}_{n+1},\forall n \in \{1,\cdots,N-1\}$}, in which the workers are divided into two groups ({\it head} and {\it Tail}), and each worker in the head (tail) group communicates only with its two neighboring workers from the tail (head) group as shown in Fig. \ref{fig1}-(b). Due to its communication with only two neighbors rather than all the neighbors or a central entity, the communication in each iteration is significantly reduced. Moreover, by dividing the workers into two equal groups, at most half of the workers are competing for the communication resources at every communication round. 

Despite this sparse communication where each worker communicates with at most two neighbors, we prove that GADMM converges to the optimal solution for convex functions. We numerically show that its communication overhead is lower than that of state-of-the-art communication-efficient centralized and decentralized algorithms including Lazily Aggregated Gradient (LAG)~\citep{chen2018lag}, and dual averaging~\citep{duchi2011dual} for linear and logistic regression on synthetic and real datasets. Furthermore, we propose a variant of GADMM, Dynamic GADMM (D-GADMM), to consider the dynamic networks in which the workers are moving objects (\eg vehicles), so the neighbors of each worker could change over time. Moreover, we prove that D-GADMM inherits the same convergence guarantees of GADMM. Interestingly, we show that D-GADMM not only adjusts to dynamic networks, but it also improves the convergence speed of GADMM, \ie given a static physical topology, keeping on randomly changing the way the connectivity chain is constructed (Fig. \ref{fig1}-(b)) can significantly accelerate the convergence of GADMM. It is worth mentioning that it was shown in~\citep{nedic2018network} as the number of links in the network graph decreases, the convergence speed becomes slower. However, we show that the decrease of the convergence speed of GADMM compared to the standard  parameter server-based ADMM (fully connected graph) due to sparsifying the network graph can be compensated by continuously keep changing neighbors and utilize D-GADMM.

\begin{figure*}
\centering
\includegraphics[width=\textwidth]{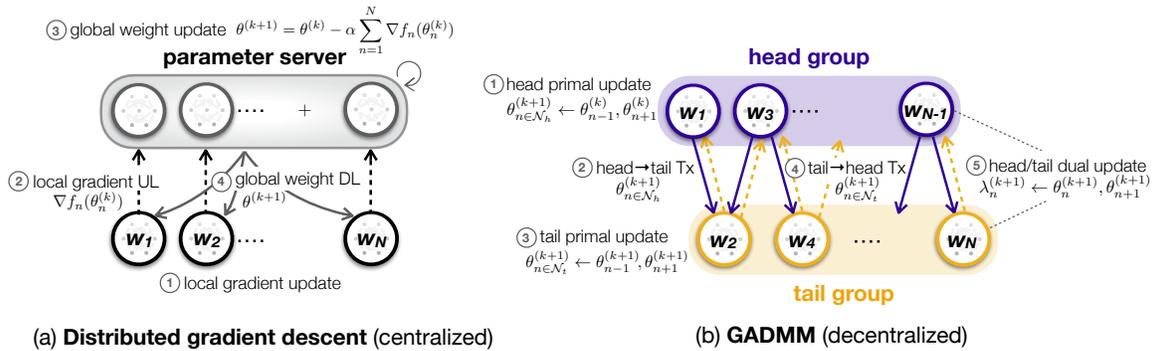}
\caption{  An illustration of (a) distributed gradient descent with a parameter server and (b) GADMM without any central entity.}
\label{fig1}
\end{figure*}

\section{Related Works and Contributions}

\textbf{Distributed Optimization.} There are a variety of distributed optimization algorithms proposed in the literature, such as primal methods \citep{jakovetic2014fast,nedic2014distributed,nedic2009distributed,shi2015proximal} and primal-dual methods \citep{chang2014multi,koppel2017proximity,8624463}. Consensus optimization underlies most of the primal methods, while dual decomposition and ADMM are the most popular among the primal-dual algorithms \citep{glowinski1975approximation,gabay1975dual,boyd2011distributed,jaggi2014communication,ma2017distributed,deng2017parallel}. The performance of distributed optimization algorithms is commonly characterized by their computation time and communication cost. The computation time is determined by the per-iteration complexity of the algorithm. The communication cost is determined by: (i) the number of \emph{communication rounds} until convergence, (ii) the number of \emph{channel uses} per communication round, and (iii) the \emph{bandwidth/power} usage per channel use. Note that the number of communication rounds is proportional to the number of iterations; \eg $2$ rounds at every iteration $k$, for uplink and downlink transmissions in Fig.~1-(a) or for head-to-tail and tail-to-head transmissions in Fig.~1-(b). For a large scale network, the communication cost often becomes dominant compared to the computation time, calling for communication efficient distributed optimization~~\citep{zhang2012communication,Brendan17,park2018wireless,jordan2018communication,liu2019communication,nandan2019}.

\textbf{Communication Efficient Distributed Optimization}.\quad
A vast amount of work is devoted to reducing the aforementioned three communication cost components. To reduce the bandwidth/power usage per channel use, decreasing communication payload sizes is one popular solution, which is enabled by gradient quantization \citep{suresh2017distributed}, model parameter quantization \citep{Zhu:2016aa,nandan2019}, and model output exchange for large-sized models via knowledge distillation~\citep{Jeong18}. To reduce the number of channel uses per communication round, exchanging model updates can be restricted only to the workers whose computation delays are less than a target threshold~\citep{Wang:2018aa}, or to the workers whose updates are sufficiently changed from the preceding updates, with respect to gradients~\citep{chen2018lag}, or model parameters~\citep{liu2019communication}. Albeit their improvement in communication efficiency for every iteration $k$, most of the algorithms in this literature are based on distributed gradient descent, and this limits their required communication rounds to the convergence rate of distributed gradient descent, which is ~ {$\mathcal{O}(1/k)$} for differentiable and smooth objective functions and can be as low as~ {$\mathcal{O}(1/\sqrt{k})$} (\eg when the objective function is non-differentiable everywhere~\citep{boyd2011distributed}). 

On the other hand, primal-dual decomposition methods are shown to be effective in enabling distributed optimization~\citep{jaggi2014communication,boyd2011distributed,ma2017distributed,glowinski1975approximation,gabay1975dual,deng2017parallel}, among which ADMM is a compelling solution that often provides a fast convergence rate with low complexity~\citep{glowinski1975approximation,gabay1975dual,deng2017parallel}. It was shown in~\citep{chen2016direct} that Gauss-Seidel ADMM \citep{glowinski1975approximation} achieves the convergence rate  {$o(1/k)$}. However, this convergence rate is ensured only when the objective function is a sum of two separable convex functions.


Finally, all aforementioned distributed algorithms require  a parameter server being connected to every worker, which may induce a costly communication link to some workers or it may not even be feasible particularly for the workers located beyond the server's coverage. In sharp contrast, we aim at developing a decentralized optimization framework ensuring fast convergence without any central entity.

\textbf{Decentralized Optimization}.\quad
 For decentralized topology, decentralized gradient descent (DGD) has been investigated in \citep{nedic2018network}. Since DGD encounters a lower number of connection per worker compared to parameter-server based GD, it achieves a slower convergence. Beyond GD based approaches, several communication-efficient decentralized algorithms were proposed for both time-variant and invariant topologies. \citep{duchi2011dual,scaman2018optimal} proposed decentralized algorithms to solve the problem for time-invariant topology at a convergence rate of $\mathcal{O}(1/\sqrt{k)}$.
 On the other hand, ~\citep{lan2017communication} proposed a decentralized algorithm that enforces each worker to transmit the updated primal and dual variables at each iteration. Note that, in GADMM, each worker is required to share the primal parameters only per iteration. Finally, it is worth mentioning that a decentralized algorithm was proposed in \citep{he2018cola}, but that algorithm was studied only for linear learning tasks.
 
For time-varying topology, there are a few proposed algorithms in the literature. For instance, \citep{nedic2014distributed} proposed a sub-gradient based algorithm for time-variant directed graph. The algorithm enforces each worker to send two sets of variables to its neighboring nodes per iteration and achieves $\mathcal{O}(1/\sqrt{k)}$ convergence rate. In contrast to that, in D-GADMM, only primal variables are shared with neighbors at each iteration.  Finally, \citep{nedic2017achieving} proposed an algorithm that achieves a linear convergence speed but for strongly convex functions only. Moreover, it also enforces each worker to send more than one set of variables per communication round.

\textbf{Contribution}.\quad
We formulate the decentralized machine learning (DML) problem as a constrained optimization problem that can be solved in a decentralized way. Moreover, we propose a novel algorithm to solve the formulated problem optimally for convex functions. The proposed algorithm is shown to be fast and communication-efficient. It achieves significantly less communication overhead compared to the standard ADMM. The proposed GADMM algorithm allows (i) only half of the workers to transmit their updated parameters at each communication round, (ii) the workers update their model parameters in parallel, while each worker communicates only with two neighbors which makes it communication-efficient. Moreover, we propose D-GADMM which has two advantages: (i) it accounts for time-varying network topology, (ii) it improves the convergence speed of GADMM by  randomly changing neighbors even when the physical topology is not time-varying. Therefore, D-GADMM integrates the communication efficiency of GADMM which uses only two links per worker (sparse graph) with the fast convergence speed of the standard ADMM with parameter server (star topology with N connection to a central entity).
It is worth mentioning that GADMM is closely related to other group-based ADMM methods as in~\citep{Wang:2014,wang2017group}, but these methods consider more communication links per iteration than our proposed GADMM algorithm. Notably, the algorithm in~\citep{wang2017group} still relies on multiple central entities, \ie master workers under a master-slave architecture, whereas GADMM requires no central entity wherein workers are equally divided into head and tail groups.

The rest of the paper is organized as follows. In section \ref{probForm}, we describe the problem formulation. We describe our proposed variant of ADMM (GADMM) and analyze its convergence guarantees in sections \ref{propAlgo} and \ref{propConv} respectively. In section \ref{dgadmm}, we describe D-GADMM which is an extension of our proposed algorithm to time varying networks. In section \ref{eval}, we discuss our simulation results comparing GADMM to the considered baselines. Finally, in section~\ref{conclusion}, we conclude the paper and briefly discuss future directions.

\section{Problem Formulation}
\label{probForm}
We consider a network of $N$ workers where each worker is equipped with the task to learn a global parameter $\boldsymbol{\Theta}$. The aim is to minimize the global convex loss function $F(\boldsymbol{\Theta})$~which is sum of the local convex, proper, and closed functions $f_n(\boldsymbol{\Theta})$ for all $n$. We consider the following optimization problem 
\begin{align}\label{main}
\min_{\boldsymbol{\Theta}} F(\boldsymbol{\Theta}), \quad F(\boldsymbol{\Theta}):= \sum_{n=1}^N f_n(\boldsymbol{\Theta}),
\end{align}
where $\boldsymbol{\Theta} \in \mathbb{R}^d$~is the global model parameter. Gradient descent algorithm can be used to solve the problem in \eqref{main} iterativly in a central entity. The goal here is to solve the problem in a distributed manner. 
The standard technique used in the literature for distributed solution is consensus formulation of \eqref{main} given by. 
\begin{align}
    \min_{\boldsymbol{\Theta},\{\boldsymbol{\theta}_n\}_{n=1}^N}\ \ \ & \sum_{n=1}^N f_n(\boldsymbol{\theta}_n)    \label{orig}\\
\text{s.t.}\ \ &
    \boldsymbol{\theta}_n =\boldsymbol{\Theta}, \ \ \forall \ n.
        \label{orig_c1}
\end{align}
Note that with the reformulation in \eqref{orig}-\eqref{orig_c1}, the objective function becomes separable across the workers and hence  can be solved in a distributed manner. The problem in \eqref{orig}-\eqref{orig_c1} is known as the {\it global consensus problem} since the constraint forces all the variables across different workers to be equal as detailed in \citep{boyd2011distributed}.   The problem in \eqref{orig}-\eqref{orig_c1} can be solved using  the primal-dual based algorithms as in \citep{chang2014distributed,5203882,nedic2009distributed}, saddle point algorithms proposed in \citep{koppel2017proximity,8624463}, and ADMM-based techniques such as \citep{glowinski1975approximation,boyd2011distributed,deng2017parallel}.  ADMM forms an augmented Lagrangian which adds a quadratic term to the Lagrange function and breaks the main problem into sub-problems that are easier to solve per iteration. Note that in the ADMM implementation \citep{boyd2011distributed,deng2017parallel}, only the primal variables $\{\boldsymbol{\theta}_n\}_{n=1}^N$ can be updated in a distributed manner.  However, the step of updating $\boldsymbol{\Theta}$~requires collecting $\boldsymbol{\theta}_n$~from all workers which is communication inefficient \citep{boyd2011distributed}.

The problem formulation in~\eqref{orig}-\eqref{orig_c1} can be solved using standard ADMM (parameter server based-ADMM). The augmented Lagrangian of the optimization problem in~\eqref{orig}-\eqref{orig_c1}  as
\begin{align}
\boldsymbol{\mathcal{L}}_{\rho}(\boldsymbol{\Theta},\{\boldsymbol{\theta}_n\}_{n=1}^N,\boldsymbol{\lambda})=&\sum_{n=1}^N f_n(\boldsymbol{\theta}_n) + \sum_{n=1}^{N} \ip{\lamb _n, \boldsymbol{\theta}_{n} - \boldsymbol{\Theta}}+ \frac{\rho}{2}  \sum_{n=1}^{N} \| \boldsymbol{\theta}_{n} - \boldsymbol{\Theta}\|^2, 
\label{augmentedLagAdmm}
\end{align}
where $\boldsymbol{\lambda}:=[\lamb_1^T, \cdots, \lamb_{N}^T]^T$~is the collection of the dual variables, and $\rho$~is a constant adjusting the penalty for the disagreement between $\bbtheta_n$~and $\boldsymbol{\Theta}$. The primal and dual variables under ADMM are updated in the following three steps. 

\begin{enumerate}[leftmargin=12pt,label={\arabic*)}]
\item At iteration $k+1$, the \emph{primal variable of each workers} is updated as:
\begin{align}
{\boldsymbol{\theta}}_{n}^{k+1} =\arg\min_{\bbtheta_n}\big[f_n(\boldsymbol{\theta}_n) +&\ip{\lamb_{n}^{k}, \boldsymbol{\theta}_{n} - \boldsymbol{\Theta}^{k}}
+ \frac{\rho}{2} \| \boldsymbol{\theta}_{n} - \boldsymbol{\Theta}^{k}\|^2\big], n \in \{1,\cdots, N\}
\label{admmUpdate}
\end{align}

\item After the update in \eqref{admmUpdate}, each workers sends its primal variable (updated model) to the parameter server. The \emph{primal variable of the parameter server} is then updated as:
\begin{align}
{\boldsymbol{\Theta}}^{k+1} = \frac{1}{N}\sum_{n=1}^N\Big(\boldsymbol{\theta}_{n}^{k+1} +\frac{1}{\rho}\lamb_{n}^{k}\Big).
\label{admmServerUpdate}
\end{align}

\item After the update in \eqref{admmServerUpdate}, the parameter server broadcasts its primal variable (the updated global model) to all workers. After receiving the global model ($\boldsymbol{\Theta}^{k+1}$) from the parameter server, \emph{each worker locally updates its dual variable} $\lamb_{n}$~as follows
\begin{align}
{\lamb}_n^{k+1}={\lamb}_n^{k} + \rho({\boldsymbol{\theta}}_{n}^{k+1} - {\boldsymbol{\Theta}}^{k+1}), n=\{1,\cdots,N\}.
\label{admmLambdaUpdateb}
\end{align}
\end{enumerate}
Note that standard ADMM requires a parameter server that collects updates from all workers, update a global model and broadcast that model to all workers. Such a scheme may not be a communication-efficient due to: (i) $N$ workers competing for the limited communication resources at every iteration, (ii) the worker with the weakest communication channel will be the bottleneck for the communication rate of the broadcast channel from the parameter server to the workers, (iii) some workers may not be in the coverage zone of the parameter server. 

In contrast to standard ADMM, we propose a decentralized algorithm that minimizes the communication cost required per worker by allowing only $N/2$ workers to transmit at every communication round, so the communication resources to each worker are doubled compared to parameter server-based ADMM. Moreover, it limits the communication of each worker to include only two neighbors.  We consider the optimization problem in \eqref{orig}-\eqref{orig_c1} and rewrite the constraints as follows.

 \begin{align}
   {\boldsymbol{\theta}^\star}:= \arg\min_{\{\boldsymbol{\theta}_n\}_{n=1}^N} \ &\sum_{n=1}^N f_n(\boldsymbol{\theta}_n)      \label{com_admm}\\
\text{s.t.}\ &
    \boldsymbol{\theta}_{n} = \boldsymbol{\theta}_{n+1}, \ \  n=1,\cdots, N-1.
    \label{com_admm_c1}
\end{align}

Here $\bbtheta^\star$ is the optimal and note that $\boldsymbol{\theta}_{n-1}^\star=\boldsymbol{\theta}_{n}^\star$~and $\boldsymbol{\theta}_{n}^\star=\boldsymbol{\theta}_{n+1}^\star$~for all $n$. This implies that each worker $n$ has joint constraints with only two neighbors (except for the two end workers which have only one). Nonetheless, ensuring $\boldsymbol{\theta}_{n}=\boldsymbol{\theta}_{n+1}$~for all $n \in \{1,\cdots, N-1\}$~at the convergence point yields convergence to a global model parameter that is shared across all workers.

\section{Proposed Algorithm: GADMM}
\label{propAlgo}

We will now describe our proposed algorithm, GADMM, that solves the optimization problem defined in \eqref{com_admm}-\eqref{com_admm_c1} in a decentralized manner. The proposed algorithm is fast since it allows workers belonging to the same group to update their model parameters in parallel, and it is communication-efficient since it allows workers to exchange variables with a minimum number of neighbors and enjoys a fast convergence rate. Moreover, it allows only half of the workers to transmit their updated model parameters at each communication round. Note that when the number of workers who update their parameters per communication round is reduced to half, the communication physical resources (\eg bandwidth) available to each worker are doubled when those resources are shared among workers.  

The \textbf{main idea} of the proposed algorithm is presented in Fig.~\ref{fig1}-(b). The proposed GADMM algorithm splits the network nodes (workers) connected  with a chain into two groups {\it head} and {\it tail} such that each worker in the head's group is connected to other workers through two tail workers.  It allows updating the parameters in parallel for the workers in the same group. In one algorithm iterate, the workers in the head group update their model parameters, and each head worker transmits its updated model to its directly connected tail neighbors. Then, tail workers update their model parameters to complete one iteration. In doing so, each worker (except the edge workers) communicates with only two neighbors to update its parameter, as depicted in~Fig.~\ref{fig1}-(b). Moreover, at any communication round, only half of the workers transmit their parameters, and these parameters are transmitted to only two neighbors.

In contrast  to the Gauss-Seidel ADMM in \citep{boyd2011distributed}, GADMM allows all the head (tail) workers to update their parameters in parallel and still converges to the optimal solution for convex functions as will be shown later in this paper. Moreover, GADMM has much less communication overhead as compared to PJADMM in  \citep{deng2017parallel} which requires all workers to send their parameters to a central entity at every communication round. Also, GADMM has fewer hyperparameters to control and less computation per iteration than PJADMM. The detailed steps of the proposed algorithm are summarized in Algorithm~\ref{alhead}.

To intuitively describe GADMM, without loss of generality, we consider an even $N$ number of workers under their linear connectivity graph shown in Fig.~\ref{fig1}-(b), wherein each head (or tail) worker communicates at most with two neighboring tail (or head) workers, except for the edge workers (\ie first and last workers). With that in mind, we start by writing the augmented Lagrangian of the optimization problem in \eqref{com_admm}-\eqref{com_admm_c1} as
\begin{align}
\boldsymbol{\mathcal{L}}_{\rho}(\{\boldsymbol{\theta}_n\}_{n=1}^N,\boldsymbol{\lambda})=&\sum_{n=1}^N f_n(\boldsymbol{\theta}_n) + \sum_{n=1}^{N-1} \ip{\lamb _n, \boldsymbol{\theta}_{n} - \boldsymbol{\theta}_{n+1}}+ \frac{\rho}{2}  \sum_{n=1}^{N-1} \| \boldsymbol{\theta}_{n} - \boldsymbol{\theta}_{n+1}\|^2, 
\label{augmentedLag4}
\end{align}
Let's divide the $N$~workers into two groups, head ${\cal N}_h=\{\boldsymbol{\theta}_1, \boldsymbol{\theta}_3, \boldsymbol{\theta}_5,\cdots,\boldsymbol{\theta}_{N-1}\}$, and tail ${\cal N}_t=\{\boldsymbol{\theta}_2, \boldsymbol{\theta}_4, \boldsymbol{\theta}_6,\cdots,\boldsymbol{\theta}_{N}\}$, respectively. The primal and dual variables under GADMM are updated in the following three steps. 

\begin{enumerate}[leftmargin=12pt,label={\arabic*)}]
\item At iteration $k+1$, the \emph{primal variables of head workers} are updated as:
\begin{align}
{\boldsymbol{\theta}}_{n}^{k+1} =\arg\min_{\bbtheta_n}\big[f_n(\boldsymbol{\theta}_n) +&\ip{\bblambda_{n-1}^{k}, \boldsymbol{\theta}_{n-1}^{k} - \boldsymbol{\theta}_{n}}+\ip{\lamb_{n}^{k}, \boldsymbol{\theta}_{n} - \boldsymbol{\theta}_{n+1}^{k}} + \frac{\rho}{2} \| \boldsymbol{\theta}_{n-1}^{k} - \boldsymbol{\theta}_{n}\|^2\nonumber
\\
+& \frac{\rho}{2} \| \boldsymbol{\theta}_{n} - \boldsymbol{\theta}_{n+1}^{k}\|^2\big], n \in {\cal N}_h\setminus\{1\}
\label{headUpdate}
\end{align}
Since the first head worker ($n=1$) does not have a left neighbor ($\boldsymbol{\theta}_{n-1}$ is not defined), its model is updated as follows.
\begin{align}
{\boldsymbol{\theta}}_{n}^{k+1} =\arg\min_{\bbtheta_n}\big[f_n(\boldsymbol{\theta}_n)&+\ip{\lamb_{n}^{k}, \boldsymbol{\theta}_{n} - \boldsymbol{\theta}_{n+1}^{k}} 
+\frac{\rho}{2} \| \boldsymbol{\theta}_{n} - \boldsymbol{\theta}_{n+1}^{k}\|^2\big], n=1
\label{headUpdateEdge}
\end{align}

\item After the updates in \eqref{headUpdate} and \eqref{headUpdateEdge}, head workers send their updates to their two tail neighbors. The \emph{primal variables of tail workers} are then updated as:
\begin{align}
{\boldsymbol{\theta}}_{n}^{k+1} =\arg\min_{\boldsymbol{\theta}_n} \big[f_n(\boldsymbol{\theta}_n) +& \ip{\bblambda_{n-1}^{k}, \boldsymbol{\theta}_{n-1}^{k+1}-\boldsymbol{\theta}_{n}}
+\ip{\lamb_{n}^{k}, \boldsymbol{\theta}_{n} - \boldsymbol{\theta}_{n+1}^{k+1}}+ \frac{\rho}{2} \| \boldsymbol{\theta}_{n-1}^{k+1} - \boldsymbol{\theta}_{n}\|^2\nonumber\\ +& \frac{\rho}{2} \| \boldsymbol{\theta}_{n} - \boldsymbol{\theta}_{n+1}^{k+1}\|^2\big], n \in {\cal N}_t\setminus\{N\}.
\label{tailUpdate}
\end{align}
Since the last tail worker ($n=N$) does not have a right neighbor ($\boldsymbol{\theta}_{n+1}$ is not defined), its model is updated as follows.
\begin{align}
{\boldsymbol{\theta}}_{n}^{k+1} =\arg\min_{\boldsymbol{\theta}_n} \big[f_n(\boldsymbol{\theta}_n) +& \ip{\bblambda_{n-1}^{k}, \boldsymbol{\theta}_{n-1}^{k+1}-\boldsymbol{\theta}_{n}}
+ \frac{\rho}{2} \| \boldsymbol{\theta}_{n-1}^{k+1} - \boldsymbol{\theta}_{n}\|^2\big], n=N.
\label{tailUpdateEdge}
\end{align}

\item After receiving the updates from neighbors, \emph{every worker locally updates its dual variables} $\lamb_{n-1}$~and $\lamb_{n}$~as follows
\begin{align}
{\lamb}_n^{k+1}={\lamb}_n^{k} + \rho({\boldsymbol{\theta}}_{n}^{k+1} - {\boldsymbol{\theta}}_{n+1}^{k+1}), n=\{1,\cdots,N-1\}.
\label{lambdaUpdateb}
\end{align}
\end{enumerate}

These three steps of GADMM are summarized in Algorithm \ref{alhead}. We remark that when $f_n(\boldsymbol{\theta}_n)$~is convex, proper, closed, and differentiable for all $n$, the subproblems in \eqref{headUpdate} and \eqref{tailUpdate} are convex and differentiable with respect to $\boldsymbol{\theta}_n$. That is true since the additive terms in the augmented Lagrangian are the addition of quadratic and linear terms, which are also convex and differentiable.

		\begin{algorithm}[t]
			{ 				
			\begin{algorithmic}[1]
					\STATE {\bf Input}: $N, f_n(\boldsymbol{\theta}_n) \ \text{for all} \ n, \rho$
					\STATE {\bf Initialization}:  
					\STATE  ${\cal N}_h=\{\boldsymbol{\theta}_n \mid \text{$n$: odd}\}$,~${\cal N}_t=\{\boldsymbol{\theta}_n \mid \text{$n$: even}\}$ 
					\STATE $\boldsymbol{\theta}_n^{(0)}=0, \lamb_{n}^{(0)}=0$ for all $n$
					\FOR {$k=0,1,2,\cdots,K$}
					
					\STATE \textbf{Head worker $n \in {\cal N}_h$:} 
					\STATE \hspace{.5cm}\textbf{computes} its primal variable $\boldsymbol{\theta}_n^{k+1}$ via \eqref{headUpdate} in parallel; and
					\STATE \hspace{.5cm}\textbf{sends} $\boldsymbol{\theta}_n^{k+1}$ to its neighboring workers $n-1$ and $n+1$. 
				
					\STATE \textbf{Tail worker $n \in {\cal N}_t$:} 
					\STATE \hspace{.5cm}\textbf{computes} its primal variable  $\boldsymbol{\theta}_n^{k+1}$ via \eqref{tailUpdate} in parallel; and
					\STATE \hspace{.5cm}\textbf{sends} $\boldsymbol{\theta}_n^{k+1}$ to its neighbor workers $n-1$ and $n+1$.

					\STATE \textbf{Every worker updates}  the dual variables $\lamb_{n-1}^{k}$ and $\lamb_{n}^{k}$ via \eqref{lambdaUpdateb} locally.
					
					\ENDFOR

				\end{algorithmic}
				\caption{Group ADMM (GADMM) \label{alhead}}
			}						
		\end{algorithm}

\section{Convergence Analysis}
\label{propConv}

In this section, we focus on the convergence analysis of the proposed algorithm. It is essential to prove that the proposed algorithm indeed converges to the optimal solution of the problem in \eqref{com_admm}-\eqref{com_admm_c1} for convex, proper, and closed objective functions. The idea to prove the convergence is related to the proof of Gauss-Seidel ADMM in \citep{boyd2011distributed}, while additionally accounting for the following three challenges: (i) the additional terms that appear when the problem is a sum of more than two separable functions, (ii) the fact that each worker can communicate with two neighbors only, and (iii) the parallel model parameter updates of the head (tail) workers. We show that the GADMM iterates converge to the optimal solution after addressing all the above-mentioned challenges in the proof. Before presenting the main technical Lemmas and Theorems, we start with the necessary and sufficient optimality conditions, which are the primal and the dual feasibility conditions \citep{boyd2011distributed} defined as
\begin{align}\label{primal_feasiblity}
\boldsymbol{\theta}_n^\star =& \boldsymbol{\theta}_{n-1}^\star, n \in \{2,\cdots,N\} \quad \quad\quad\quad\quad\quad \quad \text{(primal feasibility)}
\end{align} 
\begin{align}\label{dual_feasibility}
&\boldsymbol{0} \in \partial f_n(\boldsymbol{\theta}_n^\star) - \bblambda_{n-1}^\star + \lamb_{n}^\star, n \in \{2,\cdots,N-1\}\nonumber\\
&\boldsymbol{0} \in \partial f_n(\boldsymbol{\theta}_n^\star) + \lamb_{n}^\star, n=1
 \quad \quad\quad\quad\quad\quad \quad \quad \quad \text{(dual feasibility)}\nonumber\\
 &\boldsymbol{0} \in \partial f_n(\boldsymbol{\theta}_n^\star) + \lamb_{n-1}^\star, n=N
\quad \quad\quad\quad\quad\quad \quad \quad \quad
\end{align}
We remark that the optimal values $\bbtheta_n^\star$ are equal for each $n$, we denote $\boldsymbol{\theta}^\star=\boldsymbol{\theta}_n^\star = \boldsymbol{\theta}_{n-1}^\star$ for all $n$. Note that, at iteration $k+1$, we calculate $\boldsymbol{\theta}_{n}^{k+1}$ for all $n\in {\cal N}_t\setminus \{N\}$ as in \eqref{tailUpdate}, from the first order optimality condition, it holds that  
\begin{align}
\boldsymbol{0} &\in \partial f_n(\boldsymbol{\theta}_{n}^{k+1}) - \bblambda_{n-1}^{k} + \lamb_{n}^{k} + \rho (\boldsymbol{\theta}_{n}^{k+1}- \boldsymbol{\theta}_{n-1}^{k+1})+ \rho (\boldsymbol{\theta}_{n}^{k+1} - \boldsymbol{\theta}_{n+1}^{k+1}).
\label{eq3}
\end{align}
Next, rewrite the equation in \eqref{eq3} as
\begin{align}
\boldsymbol{0} &\in \partial f_n(\boldsymbol{\theta}_{n}^{k+1}) - \big(\bblambda_{n-1}^{k} + \rho (\boldsymbol{\theta}_{n-1}^{k+1} - \boldsymbol{\theta}_{n}^{k+1})\big)+ \left(\lamb_{n}^{k} + \rho (\boldsymbol{\theta}_{n}^{k+1}- \boldsymbol{\theta}_{n+1}^{k+1})\right).
\label{eq4}
\end{align}
From the update in \eqref{lambdaUpdateb}, the equation in
\eqref{eq4} implies that
\begin{align}
\boldsymbol{0} \in \partial f_n(\boldsymbol{\theta}_{n}^{k+1}) - \lamb_{n-1}^{k+1}+ \lamb_{n}^{k+1}, n\in {\cal N}_t\setminus \{N\}.
\label{slaveEq}
\end{align}
Note that for the $N$-th worker, We calculate $\boldsymbol{\theta}_{N}^{k+1}$ as in \eqref{tailUpdateEdge}, then we follow the same steps, and we get 
\begin{align}
\boldsymbol{0} \in \partial f_n(\boldsymbol{\theta}_{n}^{k+1}) - \lamb_{n-1}^{k+1}, n=N.
\label{slaveEqEdge}
\end{align}
From the result in \eqref{slaveEq} and \eqref{slaveEqEdge}, it holds that the dual feasibility condition in \eqref{dual_feasibility} is always satisfied for all $n  \in {\cal N}_t$. 

Next, consider every $\boldsymbol{\theta}_{n}^{k+1}$ such that $n\in {\cal N}_h\setminus\{1\}$ which is calculated as in \eqref{headUpdate} at iteration $k$. Similarly from the first order optimality condition, we can write 
\begin{align}
\boldsymbol{0} &\in \partial f_n(\boldsymbol{\theta}_{n}^{k+1}) - \bblambda_{n-1}^{k} + \lamb_{n}^{k} + \rho (\boldsymbol{\theta}_{n}^{k+1} - \boldsymbol{\theta}_{n-1}^{k})+ \rho (\boldsymbol{\theta}_{n}^{k+1} - \boldsymbol{\theta}_{n+1}^{k})\label{feasi}.
\end{align}
Note that in \eqref{feasi}, we don't have all the primal variables calculated at $k+1$~instance. Hence, we add and subtract the terms $\boldsymbol{\theta}_{n-1}^{k+1}$ and $\boldsymbol{\theta}_{n+1}^{k+1}$ in \eqref{feasi} to get 

\begin{align}
\boldsymbol{0} \in &\;\partial  f_n(\boldsymbol{\theta}_{n}^{k+1}) - \big(\bblambda_{n-1}^{k}+ \rho(\boldsymbol{\theta}_{n-1}^{k+1} - \boldsymbol{\theta}_{n}^{k+1})\big)+ \left(\lamb_{n}^{k} +\rho(\boldsymbol{\theta}_{n}^{k+1} - \boldsymbol{\theta}_{n+1}^{k+1})\right) \nonumber 
\\
&
+\rho(\boldsymbol{\theta}_{n-1}^{k+1}-\boldsymbol{\theta}_{n-1}^{k})+\rho(\boldsymbol{\theta}_{n+1}^{k+1}-\boldsymbol{\theta}_{n+1}^{k}).
\end{align}
From the update in \eqref{lambdaUpdateb}, it holds that
\begin{align}\label{feasiblity}
\boldsymbol{0} \in \partial & f_n(\boldsymbol{\theta}_{n}^{k+1}) - \lamb_{n-1}^{k+1}+ \lamb_{n}^{k+1}+\rho(\boldsymbol{\theta}_{n-1}^{k+1}-\boldsymbol{\theta}_{n-1}^{k})+\rho(\boldsymbol{\theta}_{n+1}^{k+1}-\boldsymbol{\theta}_{n+1}^{k}).
\end{align}
%
Following the same steps for the first head worker ($n=1$) after excluding the terms $\bblambda_{n-1}^{k}$ and $\rho (\boldsymbol{\theta}_{n}^{k+1} - \boldsymbol{\theta}_{n-1}^{k})$ from $\eqref{feasi}$ (worker $1$ does not have a left neighbor) gives
\begin{align}\label{feasiblityEdge}
\boldsymbol{0} \in \partial & f_n(\boldsymbol{\theta}_{n}^{k+1}) + \lamb_{n}^{k+1}+\rho(\boldsymbol{\theta}_{n+1}^{k+1}-\boldsymbol{\theta}_{n+1}^{k}).
\end{align}
Let $\bbs_{n \in {\cal N}_h}^{k+1}$, the dual residual of worker $n\in {\cal N}_h$~at iteration $k+1$, be defined as follows
\begin{equation}\label{dualResidualEq}
\bbs_{n}^{k+1}
=\left\{\begin{array}{l}
\rho(\boldsymbol{\theta}_{n-1}^{k+1}-\boldsymbol{\theta}_{n-1}^{k})+\rho(\boldsymbol{\theta}_{n+1}^{k+1}-\boldsymbol{\theta}_{n+1}^{k}), \text{ for } n\in {\cal N}_h\setminus\{1\}\\
\rho(\boldsymbol{\theta}_{n+1}^{k+1}-\boldsymbol{\theta}_{n+1}^{k}),  \text{ for }  n = 1.
\end{array}\right.
\end{equation}
Next, we discuss about the primal feasibility condition in \eqref{primal_feasiblity} at iteration $k+1$.  Let $\bbr_{n,n+1}^{k+1}= \boldsymbol{\theta}_{n}^{k+1}-\boldsymbol{\theta}_{n+1}^{k+1}$ be the primal residual of each worker~$n \in \{1,\cdots,N-1\}$. To show the convergence of GADMM, we need to prove that the conditions in \eqref{primal_feasiblity}-\eqref{dual_feasibility} are satisfied for each worker $n$.  We have already shown that the dual feasibility condition in \eqref{dual_feasibility} is always satisfied for the tail workers, and the dual residual of tail workers is always zero. Therefore, to prove the convergence and the optimality of GADMM, we need to show that the $\bbr_{n, n+1}^{k}$ for all $n=1, \cdots, N-1$ and $\bbs_{n \in {\cal N}_h}^{k}$ converge to zero, and $\sum_{n=1}^Nf_n(\boldsymbol{\theta}_n^{k})$~converges to $\sum_{n=1}^Nf_n(\boldsymbol{\theta}^\star)$ as $k\rightarrow\infty$. Now we are in position to introduce our first result in terms of Lemma \ref{lemma:first}. 

\begin{lemma}\label{lemma:first}
For the iterates $\boldsymbol{\theta}_{n}^{k+1}$ generated by Algorithm \ref{alhead}, we have

(i) Upper bound on the optimality gap \begin{align}
&\sum_{n=1}^N[f_n(\boldsymbol{\theta}_{n}^{k+1})-f_n(\boldsymbol{\theta}^\star)]\leq -\sum_{n=1}^{N-1}\ip{\lamb_{n}^{k+1},\bbr_{n,n+1}^{k+1}}+\sum_{n\in {\cal N}_h}\ip{\bbs_{n}^{k+1},\boldsymbol{\theta}_n^\star-\boldsymbol{\theta}_n^{k+1}}.
\label{lem1Eq1}
\end{align}

(ii) Lower bound on the optimality gap	\begin{align}
&\sum_{n=1}^N[f_n(\boldsymbol{\theta}_{n}^{k+1})-f_n(\boldsymbol{\theta}^\star)]\geq -\sum_{n=1}^{N-1}\ip{\lamb_{n}^\star,\bbr_{n,n+1}^{k+1}}.
\label{lem1Eq2}
\end{align}
\end{lemma}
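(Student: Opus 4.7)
\begin{myproof}[Plan]
The plan is to derive both bounds as direct consequences of convexity of each $f_n$, applied together with the appropriate first-order optimality conditions: \eqref{dual_feasibility} at $\boldsymbol{\theta}^\star$ for the lower bound~(ii), and \eqref{slaveEq}, \eqref{slaveEqEdge}, \eqref{feasiblity}, \eqref{feasiblityEdge} at $\boldsymbol{\theta}_n^{k+1}$ for the upper bound~(i). After summing the resulting subgradient inequalities over $n$, the main task reduces to a telescoping/index-shifting argument that (a) cancels all $\boldsymbol{\theta}^\star$-terms and (b) regroups the $\boldsymbol{\theta}_n^{k+1}$-terms into the compact primal-residual form $\sum_{n=1}^{N-1}\ip{\lamb_n,\bbr_{n,n+1}^{k+1}}$. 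The extra dual-residual contributions, which exist \emph{only} for the head workers, will produce precisely the second sum in \eqref{lem1Eq1}.

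For part~(ii), I would first read off from \eqref{dual_feasibility} that $\bblambda_{n-1}^\star-\lamb_n^\star\in\partial f_n(\boldsymbol{\theta}^\star)$ for interior workers, with $-\lamb_1^\star\in\partial f_1(\boldsymbol{\theta}^\star)$ and $\bblambda_{N-1}^\star\in\partial f_N(\boldsymbol{\theta}^\star)$ at the boundary. Convexity gives
\begin{equation*}
f_n(\boldsymbol{\theta}_n^{k+1})-f_n(\boldsymbol{\theta}^\star)\ \geq\ \ip{\bbg_n^\star,\boldsymbol{\theta}_n^{k+1}-\boldsymbol{\theta}^\star}
\end{equation*}
for each $n$. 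Summing and then splitting the inner products into their $\boldsymbol{\theta}_n^{k+1}$- and $\boldsymbol{\theta}^\star$-parts, the $\boldsymbol{\theta}^\star$-part collapses to zero because the coefficients $-\lamb_1^\star+\sum_{n=2}^{N-1}(\lamb_n^\star-\bblambda_{n-1}^\star)+\bblambda_{N-1}^\star$ telescope. The $\boldsymbol{\theta}_n^{k+1}$-part is regrouped by the substitution $m=n-1$ in the $\bblambda_{n-1}^\star$-terms, which yields exactly $-\sum_{n=1}^{N-1}\ip{\lamb_n^\star,\boldsymbol{\theta}_n^{k+1}-\boldsymbol{\theta}_{n+1}^{k+1}}=-\sum_{n=1}^{N-1}\ip{\lamb_n^\star,\bbr_{n,n+1}^{k+1}}$.

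For part~(i), I would do the mirror image of the above, but now using subgradients at the current iterate. From \eqref{slaveEq}--\eqref{slaveEqEdge}, every tail worker admits the subgradient $\lamb_{n-1}^{k+1}-\lamb_n^{k+1}$ (with the appropriate boundary cases for $n=N$). From \eqref{feasiblity}--\eqref{feasiblityEdge}, every head worker admits the subgradient $\lamb_{n-1}^{k+1}-\lamb_n^{k+1}-\bbs_n^{k+1}$ (with $n=1$ handled separately via \eqref{feasiblityEdge} and the convention that the $\lamb_0$-term is absent). Convexity now gives
\begin{equation*}
f_n(\boldsymbol{\theta}_n^{k+1})-f_n(\boldsymbol{\theta}^\star)\ \leq\ \ip{\bbg_n^{k+1},\boldsymbol{\theta}_n^{k+1}-\boldsymbol{\theta}^\star}.
\end{equation*}
Summing and performing the same index-shift/telescoping argument used in (ii) with $\lamb_n^\star$ replaced by $\lamb_n^{k+1}$, the $\boldsymbol{\theta}^\star$-part again vanishes and the $\boldsymbol{\theta}_n^{k+1}$-part collapses to $-\sum_{n=1}^{N-1}\ip{\lamb_n^{k+1},\bbr_{n,n+1}^{k+1}}$. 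The only pieces that are not cancelled are the $-\bbs_n^{k+1}$ contributions from head workers, which aggregate into $\sum_{n\in\ccalN_h}\ip{\bbs_n^{k+1},\boldsymbol{\theta}^\star-\boldsymbol{\theta}_n^{k+1}}$, producing the second summand of \eqref{lem1Eq1}.

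The only real obstacle is bookkeeping: the four special cases (head/tail $\times$ interior/boundary) must each be handled correctly so that the telescoping cancellation of the $\boldsymbol{\theta}^\star$-coefficients and the reindexing of the $\bblambda_{n-1}$-terms go through cleanly. Everything else is routine once the subgradients are read off the optimality conditions. I would likely adopt the shorthand $\lamb_0:=\boldsymbol{0}=:\lamb_N$ to unify the boundary cases and make the telescoping transparent.
\end{myproof}
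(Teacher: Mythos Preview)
Your proposal is correct and follows essentially the same route as the paper: optimality conditions at $\boldsymbol{\theta}_n^{k+1}$ (resp.\ at $\boldsymbol{\theta}^\star$) plus convexity, followed by the telescoping/index-shift you describe, with the convention $\lamb_0=\lamb_N=\boldsymbol{0}$ (which the paper adopts explicitly). The only cosmetic differences are that the paper phrases part~(i) via ``$\boldsymbol{\theta}_n^{k+1}$ minimizes $f_n(\cdot)+\ip{-\lamb_{n-1}^{k+1}+\lamb_n^{k+1}+\bbs_n^{k+1},\cdot}$'' rather than the equivalent subgradient inequality, and for part~(ii) it invokes the saddle-point inequality $\boldsymbol{\mathcal{L}}_0(\boldsymbol{\theta}^\star,\lamb^\star)\leq\boldsymbol{\mathcal{L}}_0(\{\boldsymbol{\theta}_n^{k+1}\},\lamb^\star)$ directly (a one-line argument that already contains the residual structure), whereas you unpack this into the dual-feasibility subgradients and redo the telescoping by hand.
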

The detailed proof is provided in Appendix \ref{sec:lem1}.  The main idea for the proof is to utilize the optimality of the updates in \eqref{headUpdate} and \eqref{tailUpdate}. We derive the upper bound for the objective function optimality gap in terms of the primal and dual residuals as stated in \eqref{lem1Eq1}. To get the lower bound in \eqref{lem1Eq2} in terms of the primal residual, the definition of the Lagrangian is used at $\rho=0$.  The result in Lemma \ref{lemma:first} is used to derive the main results in  Theorem \ref{theorem} of this paper presented next.

\begin{theorem}\label{theorem}
When $f_n(\boldsymbol{\theta}_n)$ is closed, proper, and convex for all $n$, and the Lagrangian $\boldsymbol{\mathcal{L}}_{0}$ has a saddle point, for GADMM iterates, it holds that 
%
\begin{itemize}
\item[(i)] the primal residual converges to zero as $k\rightarrow\infty$.\ie
\begin{align}
 \lim_{k\rightarrow\infty}\bbr_{n,n+1}^{k}= \boldsymbol{0},  n \in \{1,\cdots, N-1\}. 
\end{align} 
\item[(ii)] the dual residual converges to zero as $k\rightarrow\infty$.\ie
\begin{align}
 \lim_{k\rightarrow\infty} \bbs_{n}^{k}=\boldsymbol{0}, n\in {\cal N}_h.
\end{align}

\item[(iii)] the optimality gap converges to zero as  $k\rightarrow\infty$.\ie
\begin{align}
\lim_{k\rightarrow\infty}\sum_{n=1}^Nf_n(\boldsymbol{\theta}_n^{k})= \sum_{n=1}^Nf_n(\boldsymbol{\theta}^\star).
\end{align}
\end{itemize}

\end{theorem}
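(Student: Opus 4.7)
The plan is to follow the classical Lyapunov argument for Gauss--Seidel ADMM \citep{boyd2011distributed,deng2017parallel}, suitably adapted to the chain-based head/tail decomposition of GADMM. The starting point is the single inequality obtained by subtracting part (ii) of Lemma \ref{lemma:first} from part (i):
\begin{align*}
\sum_{n=1}^{N-1}\ip{\lamb_n^{k+1}-\lamb_n^\star,\bbr_{n,n+1}^{k+1}} + \sum_{n\in\ccalN_h}\ip{\bbs_n^{k+1},\bbtheta_n^{k+1}-\bbtheta^\star} \;\le\; 0.
\end{align*}
The goal is to massage this into a monotone decrease of a nonnegative Lyapunov functional, after which the convergence of the primal residual, the dual residual, and the objective gap will fall out by standard arguments.

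First I would rewrite the $\lamb$-sum using the dual update $\lamb_n^{k+1}-\lamb_n^k=\rho\,\bbr_{n,n+1}^{k+1}$ together with the identity $2\ip{a-c,a-b}=\|a-c\|^2+\|a-b\|^2-\|b-c\|^2$, yielding
\begin{align*}
\sum_{n=1}^{N-1}\ip{\lamb_n^{k+1}-\lamb_n^\star,\bbr_{n,n+1}^{k+1}}
= \tfrac{1}{2\rho}\sum_{n=1}^{N-1}\bigl(\|\lamb_n^{k+1}-\lamb_n^\star\|^2-\|\lamb_n^k-\lamb_n^\star\|^2\bigr)+\tfrac{\rho}{2}\sum_{n=1}^{N-1}\|\bbr_{n,n+1}^{k+1}\|^2.
\end{align*}
Second, in the $\bbs$-sum I would substitute the definition \eqref{dualResidualEq} and regroup by the tail index $m\in\ccalN_t$, so that each tail worker contributes $\rho\ip{\bbtheta_m^{k+1}-\bbtheta_m^k,\sum_{n\sim m,\,n\in\ccalN_h}(\bbtheta_n^{k+1}-\bbtheta^\star)}$. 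Writing each $\bbtheta_n^{k+1}-\bbtheta^\star$ as $(\bbtheta_m^{k+1}-\bbtheta^\star)\pm\bbr^{k+1}_{\cdot,\cdot}$ via the edge residual, the diagonal piece telescopes via the same identity into $\|\bbtheta_m^{k+1}-\bbtheta^\star\|^2-\|\bbtheta_m^k-\bbtheta^\star\|^2$ plus a positive quadratic $\|\bbtheta_m^{k+1}-\bbtheta_m^k\|^2$, while the off-diagonal piece produces cross terms between $\bbtheta_m^{k+1}-\bbtheta_m^k$ and primal residuals on the edges adjacent to $m$.

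These manipulations motivate the Lyapunov function
\begin{align*}
V^k := \tfrac{1}{\rho}\sum_{n=1}^{N-1}\|\lamb_n^k-\lamb_n^\star\|^2 + \rho\sum_{m\in\ccalN_t}c_m\|\bbtheta_m^k-\bbtheta^\star\|^2,
\end{align*}
with positive weights $c_m$ equal to the number of head neighbors of $m$ ($c_m=2$ for interior tail workers, $c_N=1$). Controlling the cross terms by Young's inequality with a judiciously chosen constant, and absorbing them into $\tfrac{\rho}{2}\|\bbr^{k+1}\|^2$ and $\rho c_m\|\bbtheta_m^{k+1}-\bbtheta_m^k\|^2$, yields the descent bound
\begin{align*}
V^{k+1}\le V^k - \eta_r\sum_{n=1}^{N-1}\|\bbr_{n,n+1}^{k+1}\|^2 - \eta_s\sum_{m\in\ccalN_t}\|\bbtheta_m^{k+1}-\bbtheta_m^k\|^2
\end{align*}
for some $\eta_r,\eta_s>0$. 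Because $V^k\ge 0$, telescoping gives summability, so $\bbr_{n,n+1}^k\to\bbzero$ (proving (i)) and $\bbtheta_m^{k+1}-\bbtheta_m^k\to\bbzero$; substituting the latter into \eqref{dualResidualEq} immediately gives $\bbs_n^k\to\bbzero$ for all $n\in\ccalN_h$ (proving (ii)).

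Claim (iii) then follows by sandwiching with Lemma \ref{lemma:first}. The lower bound gives $\liminf_k\sum_n[f_n(\bbtheta_n^k)-f_n(\bbtheta^\star)]\ge-\sum_n\ip{\lamb_n^\star,\lim_k\bbr_{n,n+1}^k}=0$, and the upper bound gives $\limsup_k\sum_n[f_n(\bbtheta_n^k)-f_n(\bbtheta^\star)]\le 0$ once $\bbr^k,\bbs^k\to\bbzero$ together with the boundedness of $\lamb^{k+1}$ (inherited from the boundedness of $V^k$) are plugged in. The principal obstacle is the second step: the $\bbs$-expansion produces inner products between $\bbtheta_m^{k+1}-\bbtheta_m^k$ and primal residuals on \emph{adjacent} edges of the chain, and the Young constants must be balanced carefully so that both $\|\bbr^{k+1}\|^2$ and $\|\bbtheta_m^{k+1}-\bbtheta_m^k\|^2$ retain strictly negative coefficients in the descent inequality; the edge workers ($n=1$ and $n=N$) also require separate accounting because their contributions to $\bbs$ differ structurally from those at interior vertices.
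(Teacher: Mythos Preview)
Your overall architecture is correct and matches the paper: combine the two parts of Lemma~\ref{lemma:first}, rewrite the dual term via $\lamb_n^{k+1}-\lamb_n^k=\rho\,\bbr_{n,n+1}^{k+1}$, expand $\bbs_n^{k+1}$ and regroup by the tail index, and arrive at the Lyapunov function you wrote (which is exactly the paper's, with $c_m$ the tail degree). The sandwiching argument for part~(iii) is also the paper's, modulo needing boundedness of the head iterates $\bbtheta_n^{k+1}$ in the $\bbs$-term, which follows from boundedness of the tail iterates plus $\bbr^{k+1}\to\bbzero$.

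The genuine gap is in your treatment of the cross terms. After the regrouping, the descent inequality reads
\begin{align*}
V^{k+1}\le V^k - \rho\sum_{n=1}^{N-1}\|\bbr_{n,n+1}^{k+1}\|^2 - \rho\sum_{m\in\ccalN_t}c_m\|\bbtheta_m^{k+1}-\bbtheta_m^k\|^2 - C_k,
\end{align*}
where $C_k=\sum_{m\in\ccalN_t}2\rho\ip{\bbtheta_m^{k+1}-\bbtheta_m^k,\,\bbr_{m-1,m}^{k+1}-\bbr_{m,m+1}^{k+1}}$ collects the cross terms. If you bound $C_k$ from below by Young's inequality, $2\rho|\ip{a,b}|\le \rho\alpha\|a\|^2+\tfrac{\rho}{\alpha}\|b\|^2$, summing over all $N-1$ edges gives
\begin{align*}
C_k \ge -\rho\alpha\sum_{m\in\ccalN_t}c_m\|\bbtheta_m^{k+1}-\bbtheta_m^k\|^2 - \tfrac{\rho}{\alpha}\sum_{n=1}^{N-1}\|\bbr_{n,n+1}^{k+1}\|^2,
\end{align*}
because each residual is adjacent to exactly one tail and each interior tail is adjacent to exactly two edges. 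Plugging this back leaves coefficients $\rho(1-\tfrac{1}{\alpha})$ on the residual sum and $\rho(1-\alpha)$ on the increment sum; these cannot be simultaneously positive for any $\alpha>0$, and at $\alpha=1$ both vanish. So Young's inequality gives at best $V^{k+1}\le V^k$ with no summable remainder, and the claimed $\eta_r,\eta_s>0$ cannot be obtained this way.

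The paper closes this gap not by an inequality but by showing $C_k\ge 0$ outright, using a monotonicity argument drawn from the tail subproblem. Because $\bbtheta_m^{k+1}$ minimizes $f_m(\cdot)+\ip{-\lamb_{m-1}^{k+1}+\lamb_m^{k+1},\cdot}$ and $\bbtheta_m^k$ minimizes $f_m(\cdot)+\ip{-\lamb_{m-1}^{k}+\lamb_m^{k},\cdot}$, adding the two optimality inequalities (each evaluated at the other iterate) and using $\lamb_j^{k+1}-\lamb_j^k=\rho\,\bbr_{j,j+1}^{k+1}$ yields precisely
\begin{align*}
\rho\ip{\bbr_{m-1,m}^{k+1}-\bbr_{m,m+1}^{k+1},\,\bbtheta_m^{k+1}-\bbtheta_m^k}\ge 0\qquad\text{for every }m\in\ccalN_t.
\end{align*}
This is the missing ingredient: it lets the full $\rho\|\bbr\|^2$ and $\rho c_m\|\Delta\bbtheta_m\|^2$ survive in the descent, after which your telescoping and sandwiching arguments for (i)--(iii) go through as written.
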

\begin{proof}
The detailed proof of Theorem \ref{theorem} is provided in Appendix \ref{sec:them1}. There are three main steps to prove convergence of the proposed algorithm. For a proper, closed, and convex objective function $f_n(\cdot)$, with Lagrangian $\mathcal{L}_0$ which has a saddle point $(\bbtheta^\star,\{\lamb_n\}_{\forall n})$, we define a Lyapunov function $V_k$ as
\begin{align}
V_k =1/\rho\sum_{n=1}^{N-1}\norm{\lamb_n^{k}-\lamb_n^\star}^2 +\rho\sum_{n\in {\cal N}_h\setminus\{1\}}\norm{\boldsymbol{\theta}_{n-1}^{k} -  \boldsymbol{\theta}^\star}^2+\rho\sum_{n\in {\cal N}_h}\norm{\boldsymbol{\theta}_{n+1}^{k} -  \boldsymbol{\theta}^\star}^2.
\label{lyapEq}
\end{align}
In the proof, we show that $V_k$ is monotonically decreasing at each iteration $k$ of the proposed algorithm. This property is then used to prove that the primal residuals go to zero as $k\rightarrow\infty$ which implies that $\bbr_{n,n+1}^k\rightarrow \boldsymbol{0}$ for all $n$. Secondly, we prove that the dual residuals converges to zero as $k\rightarrow\infty$ which implies that $\bbs_{n}^k\rightarrow \boldsymbol{0}$ for all $n\in {\cal N}_h$. Note that the convergence in the first and the second step implies that the overall constraint violation due to the proposed algorithm goes to zero as $k\rightarrow\infty$. In the final step, we utilize statement (i) and (ii) of Theorem \ref{theorem} into the results of Lemma \ref{lemma:first} to prove that the objective optimality gap goes to zero as $k\rightarrow\infty$. 
\end{proof}

\section{Extension to Time-Varying Network: D-GADMM}
\label{dgadmm}

In this section, we present an extension of the proposed GADMM algorithm to the scenario where the set of neighboring workers to each worker is varying over time. Note that the overlay logical topology under consideration is still chain while the physical neighbors are allowed to change. Under this dynamic setting, the execution of the proposed GADMM in Algorithm~\ref{alhead} would be disrupted. Therefore, we propose, D-GADMM (summarized in Algorithm \ref{alg2}) which adjusts to the changes in the  set of neighbors.
	\begin{algorithm}[t]
	{ 
		
		\begin{algorithmic}[1]
			\STATE {\bf Input}: $N, f_n(\cdot)$ for all $n$, $\rho$, and $\tau$
			\STATE {\bf Initialization}:  
			\STATE  ${\cal N}_h=\{\boldsymbol{\theta}_n \mid \text{$n$: odd}\}$,~${\cal N}_t=\{\boldsymbol{\theta}_n \mid \text{$n$: even}\}$ \\
			\STATE $\boldsymbol{\theta}_n^{(0)}=0, \lamb_{n}^{(0)}=0$, for all $n$
			\FOR {$k=0, 1,2,\cdots,K$}
			
			\IF {$k$\text{ mod }$\tau=0$} 
			\STATE \textbf{Every worker:} 
			\STATE \hspace{.5cm}\textbf{broadcasts} current model parameter
			\STATE \hspace{.5cm}\textbf{finds neighbors} and \textbf{refreshes} indices $\{n\}$ as explained in Appendix-D.
			\STATE \hspace{.5cm}\textbf{sends} $\lamb_{n}^{k}$ to its right neighbor (worker $n_{r,k}$)
			\ENDIF

			\STATE \textbf{Head worker $n \in {\cal N}_h$:} 
					\STATE \hspace{.5cm}\textbf{computes} its primal variable $\boldsymbol{\theta}_n^{k+1}$ via \eqref{headUpdate} in parallel; and
					\STATE \hspace{.5cm}\textbf{sends} $\boldsymbol{\theta}_n^{k+1}$ to its neighboring workers $n_{l,k}$ and $n_{r,k}$. 
				
					\STATE \textbf{Tail worker $n \in {\cal N}_t$:} 
					\STATE \hspace{.5cm}\textbf{computes} its primal variable  $\boldsymbol{\theta}_n^{k+1}$ via \eqref{tailUpdate} in parallel; and
					\STATE \hspace{.5cm}\textbf{sends} $\boldsymbol{\theta}_n^{k+1}$ to its neighbor workers $n_{l,k}$ and $n_{r,k}$.

					\STATE \textbf{Every worker updates}  the dual variables $\lamb_{n-1}^{k}$ and $\lamb_{n}^{k}$ via \eqref{lambdaUpdateb} locally.			
			
			\ENDFOR
		\end{algorithmic}
		\caption{Dynamic GADMM (D-GADMM)\label{alg2}}
	}						
\end{algorithm}

 In D-GADMM, all the workers periodically reconsider their connections after every $\tau$ iterations. if neighbors and/or worker assignment to head/tail group change, every worker broadcasts its current model parameter to the new neighbors. We assume that the workers run an algorithm that can keep constructing a communication-efficient logical chain as the underlying physical topology changes, and the design of such an algorithm is not the main focus of the paper.  It is worth mentioning that a logical graph that starts at one worker and reaches every other worker only once in the most communication-efficient way is an NP-hard problem. It can be easily shown that this problem can be reduced to a Traveling Salesman Problem (TSP). This is due to the fact that starting from one worker and choosing every next one such that the total communication cost is minimized is exactly equal to starting from one city and reaching every other city such that the total distance is minimized, \ie the workers in our problem are the cities in TSP, and the communication cost between each pair of workers in our problem is the distance between each pair of cities in TSP. Hence, proposed heuristics to solve TSP \citep{lenstra1975some,bonomi1984n} can be used to construct the chain in our problem with the aid of a central entity, and then the algorithm continues working on the decentralized way. Decentralized heuristics for TSP have been proposed which can also be used \citep{peterson1990parallel,dorigo1997ant}.  However, in this paper, we use a simple decentralized heuristic that we describe in Appendix \ref{chainConst}. Finally, it is worth mentioning that D-GADMM can still be utilized even if the physical topology does not change. In such a scenario, the workers can agree on a predefined sequence of logical chains, so changing neighbors does not require running an online algorithm, and thus it encounters zero overhead. We observe in section \ref{eval} that D-GADMM can improve the convergence speed when it is utilized even when the physical topology does not change.

The detailed steps of D-GADMM is described in Algorithm \ref{alg2}. We note that before the execution, we assume that all the nodes are connected to each other with a chain. Each node is associated with an index $n$ and there exists a link from node $1$ to node $2$, node $2$ to node $3$, and so on till $N-1$ to $N$. For each node $n$ there is an associated primal variable $\boldsymbol{\theta}_n$ from $n=1$ to $N$ and a dual variable $\boldsymbol{\lambda}_n$ from node $n=1$ to $N-1$. Under the dynamic settings, we assume that the nodes at position $n=1$ and $N$ are fixed while the other nodes are allowed to move in the network. This means that instead of having the connection in the order $1-2-3\cdots N$, the nodes are allowed to connect in any order such as $1-5-3-\cdots-4-N$, or $1-4-2-\cdots-5-N$, etc. Alternatively, the neighbors of each node $n$ are no longer fixed under the dynamic settings. To denote this behavior, since the topology is still the chain, we call the left neighbor to node $n$ as $n_{l,k}$ at iteration $k$ and similarly
$n_{r,k}$ for the right neighbor node. Therefore, at each iteration $k$, each node implements the algorithm considering {$n_{l,k}$} and {$n_{r,k}$} as its neighbors. Note that, when the topology changes at iteration $k$, every worker $n$ transmits its right dual variable $\boldsymbol{\lambda}_n^k$ to its right neighbor in the new chain to ensure that both neighbors share the same dual variable. Therefore, the right neighbor of each worker $n$ will replace $\bblambda_{n_{l,k}}^{k}$ with the dual variable that is received from its new left neighbor. With that, we show in Appendix \ref{DGADMMproof} that the algorithm  converges to the optimal solution in a similar manner to GADMM.

\section{Numerical Results}
\label{eval}
\begin{table*}[t]{
\centering 
\resizebox{
\textwidth}{!}{
\begin{tabular}{c c c c c c c c c } 
    \toprule
   \multirow{2}[4]{*}{ \textbf{Iteration}} & \multicolumn{4}{c}{\textbf{Linear} Regression}&\multicolumn{4}{c}{\textbf{Logistic} Regression} \\
    & $N=$14 & 20 & 24 & 26 & $N=$14 & 20 & 24 & 26\\ 
    \cmidrule(r){1-1}\cmidrule(l){2-5}   \cmidrule(l){6-9}
    \multicolumn{1}{l}{LAG-PS} & 542& 8,043& 54,249& 141,132 &  21,183 & 20,038 & 19,871 & 20,544\\
    \multicolumn{1}{l}{LAG-WK}&385 & 6,444 &  44,933& 121,134 &  18,584 & 17,475 & 17,050 & 17,477\\
    \multicolumn{1}{l}{ \textbf{GADMM}}&\textbf{78} & \textbf{292}&  \textbf{558}& \textbf{550}  & \textbf{120} &  \textbf{235} & \textbf{112} & \textbf{160} \\
    \multicolumn{1}{l}{GD} & 524 & 8,163& 55,174& 143,651 &  1,190 & 1,204 & 1,181 & 1,152\\
    \bottomrule\\
    \toprule
   \multirow{2}[4]{*}{ \textbf{TC}} & \multicolumn{4}{c}{\textbf{Linear} Regression}&\multicolumn{4}{c}{\textbf{Logistic} Regression} \\ 
    & $N=$14  & 20 & 24 & 26 & $N=$14 & 20 & 24 & 26\\ 
    \cmidrule(r){1-1}\cmidrule(l){2-5}   \cmidrule(l){6-9}
    \multicolumn{1}{l}{LAG-PS} & 3,183 & 52,396& 363,571& 1,035,778 & 316,570 & 419,819 & 495,792 & 553,493\\
    \multicolumn{1}{l}{LAG-WK}& \textbf{820} & 12,369 & 82,985& 241,944 & 18,786 & 17,835 & 17,432 & 17,915\\
    \multicolumn{1}{l}{ \textbf{GADMM}}& 1,092& \textbf{5,840}&  \textbf{13,392}& \textbf{14,300}  &  \textbf{696} & \textbf{1,962} & \textbf{1,030} & \textbf{1,712}\\
    \multicolumn{1}{l}{GD} & 7,860 & 171,423& 1,379,350& 3,878,577 & 17,850 & 25,284 & 29,525 & 31,104\\
    \bottomrule
\end{tabular}
}
\caption{ The required number of iterations (top) and total communication cost (bottom) to achieve the target objective error $10^{-4}$ for different number of workers, in linear and logistic regression with the real datasets. }\label{tab:1}}
\end{table*}

To validate our theoretical foundations, we numerically evaluate the performance of GADMM in linear and logistic regression tasks, compared with the following benchmark algorithms.
\begin{itemize}[leftmargin=12pt]
\item \textbf{LAG-PS}~\citep{chen2018lag}: A version of LAG where parameter server selects communicating workers. 
\item \textbf{LAG-WK}~\citep{chen2018lag}: A version of LAG where workers determine when to communicate with the server.
\item \textbf{Cycle-IAG}~\citep{blatt2007convergent,gurbuzbalaban2017convergence}: A cyclic modified version of the incremental aggregated gradient (IAG).
\item \textbf{R-IAG}~\citep{chen2018lag,Schmidt:17}: A non-uniform sampling version of stochastic average gradient (SAG).
\item \textbf{GD}: Batch gradient descent.
\item \textbf{DGD}~\citep{nedic2018network} Decentralized gradient descent.
\item \textbf{DualAvg}~\citep{duchi2011dual} Dual averaging.
\end{itemize}

For the tuning parameters, we use the setup in \citep{chen2018lag}. For our decentralized algorithm, we consider $N$ workers without any central entity, whereas for centralized algorithms, a uniformly randomly selected worker is considered as a central controller having a direct link to each worker. The performance of each algorithm is measured using:

\begin{itemize}
\item the \textbf{objective error} $|\sum_{n=1}^N \big[f_n(\boldsymbol{\theta}_n^{(k)})-f_n(\boldsymbol{\theta}^*)\big] |$\normalsize~at iteration $k$.

\item (ii) The \textbf{total communication cost~(TC)}. The TC of a decentralized algorithm is $\sum_{t=1}^{T_a} \sum_{n=1}^N {\bf 1}_{n,t} \cdot L_{n,t}^m$\normalsize, where $T_a$\normalsize~is the number of iterations to achieve a target accuracy $a$\normalsize, and ${\bf 1}_{n,t}$\normalsize~denotes an indicator function that equals~$1$ if worker $n$\normalsize~is sending an update at~$t$\normalsize, and $0$ otherwise. The term $L_{n,t}^m$\normalsize~is the cost of the communication link between workers $n$\normalsize~and $m$\normalsize~at communication round $t$\normalsize. Next, let $L_{n,t}^c$\normalsize~denote the cost of the communication between worker $n$ and the central controller at $t$. Then, the TC of a centralized algorithm is $\sum_{t=1}^{T_a} ( L_{\text{BC},t}^c +  \sum_{n=1}^N  {\bf 1}_{{n,t}} \cdot L_{n,t}^c )$\normalsize, where $L_{\text{BC},t}^c$ and $L_{n,t}^c$\normalsize's correspond to downlink broadcast and uplink unicast costs, respectively. It is noted that the communication overhead in~\citep{chen2018lag} only takes into account uplink costs.

\item The total running time (clock time) to achieve objective error $a$. This metric considers both the communication and the local computation time. We consider $L_{n,t}^m = L_{n,t}^c=L_{\text{BC},t}^c=1$ unless otherwise specified.
\end{itemize}

All simulations are conducted using the synthetic and real datasets described in \citep{Dua:2019,chen2018lag}. The synthetic data for the linear and logistic regression tasks are generated as described in~\citep{chen2018lag}. We consider $1,\!200$ samples with $50$ features, which are evenly split into workers. Next, the real data tests linear and logistic regression tasks with \textbf{Body Fat} ($252$ samples, $14$ features) and \textbf{Derm} ($358$ samples, $34$ features) datasets~\citep{Dua:2019}, respectively. As the real dataset is smaller than the synthetic dataset, we by default consider $10$ and $24$ workers for the real and synthetic datasets, respectively.

 \begin{figure}
\centering
\includegraphics[trim=0.1in 0in 0.1in 0.1in, clip,  width=\textwidth]{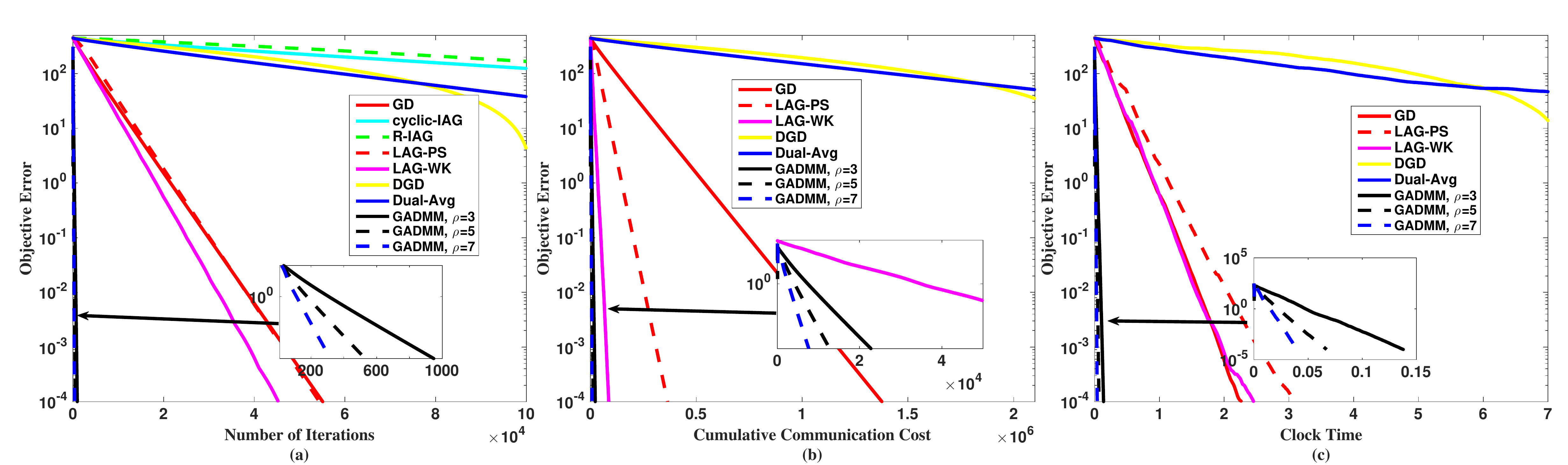}
\caption{ Objective error, total communication cost, and total running time comparison between GADMM and five benchmark algorithms, in \emph{linear} regression with synthetic ($N=24$) datasets.}
\label{comp_linSynth}
\end{figure}

\begin{figure}
\centering
\includegraphics[trim=0.1in 0in 0.1in 0.1in, clip,  width=\textwidth]{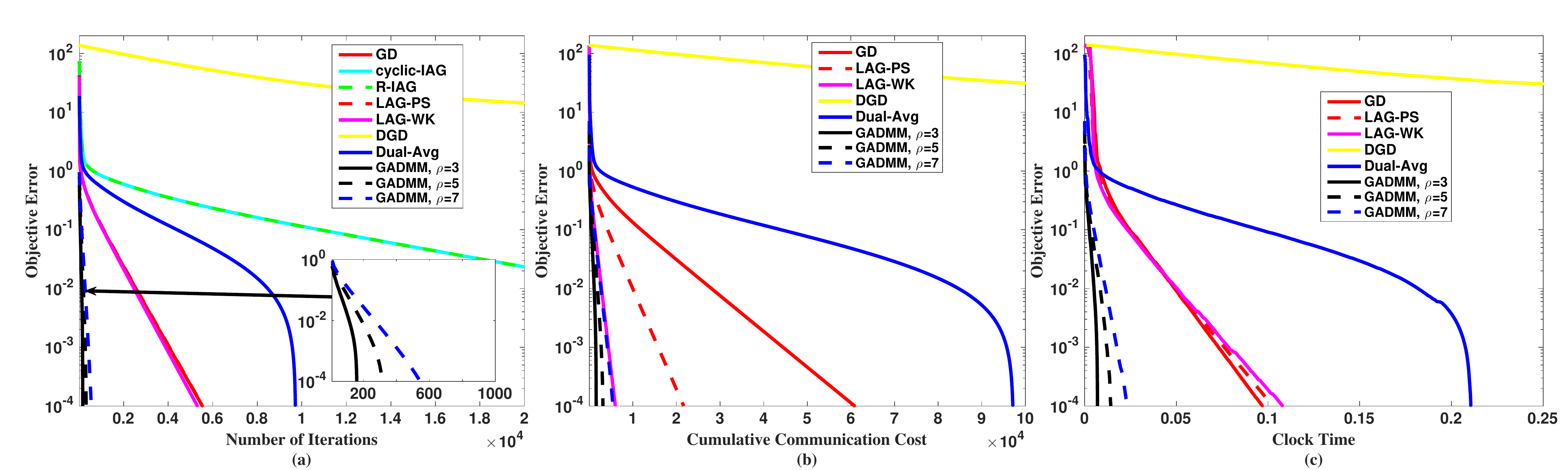}
\caption{ Objective error, total communication cost, and total running time comparison between GADMM and five benchmark algorithms, in \emph{linear} regression with real ($N=10$) datasets.}
\label{comp_linReal}
\end{figure}

Figs.~\ref{comp_linSynth}, \ref{comp_linReal}, \ref{comp_logSynth}, and \ref{comp_logReal} corroborate that GADMM outperforms the benchmark algorithms by several orders of magnitudes, thanks to the idea of two alternating groups where each worker communicates only with two neighbors. For linear regression with the synthetic dataset, Fig.~2 shows that all variants of GADMM with $\rho=$\normalsize$\;3,5,$ and $7$ achieve the target objective error of $10^{-4}$ in less than $1,\!000$ iterations, whereas GD, LAG-PS, and LAG-WK (the closest among baselines) require more than $40,\!000$ iterations to achieve the same target error. Furthermore, the TC of GADMM with $\rho=$\normalsize$\;3$ and $\rho=$\normalsize$\;5$~are $6$ and $9$ times lower than that of LAG-WK respectively. Table~1 shows similar results for different numbers of workers, only except for linear regression with the smallest number of workers ($14$), in which LAG-WK achieves the lowest TC. We also observe from Figs.~\ref{comp_linSynth} and \ref{comp_linReal} that GADMM outperforms all baselines in terms of the total running time, thanks to the fast convergence. GADMM performs matrix inversion which is computationally complex compared to calculating gradient. However, the computation cost per iteration is compensated by fast convergence.

For logistic regression, Figs.~\ref{comp_logSynth} and \ref{comp_logReal} validate that GADMM outperforms the benchmark algorithms, as in the case of linear regression in Figs.~\ref{comp_linSynth} and \ref{comp_linReal}. One thing that is worth mentioning here is shown in Fig~\ref{comp_logSynth}-(c), where we can see that the total running time of GADMM is equal to the running time of GD. The reason behind this is that the logistic regression problem is not solved in a closed-form expression at each iteration. However, GADMM still significantly outperforms GD in communication-efficiency.

Next, comparing the results in Fig.~2 and Fig.~3, we observe that the optimal $\rho$\normalsize~depends on the data distribution across workers. Namely, when the local data samples of each worker are highly correlated with the other workers' samples (\ie Body Fat dataset, Fig.~3), the local optimal of each worker is very close to the global optimal. Therefore, reducing the penalty for the disagreement between $\theta_n$\normalsize~and $\theta_{n+1}$\normalsize~by lowering $\rho$\normalsize~yields faster convergence. Following the same reasoning, higher $\rho$\normalsize~provides faster convergence when the local data samples are independent of each other (\ie synthetic datasets in Fig.~2).

\begin{figure}
\centering
\includegraphics[trim=0.1in 0in 0.1in 0.1in, clip,  width=\textwidth]{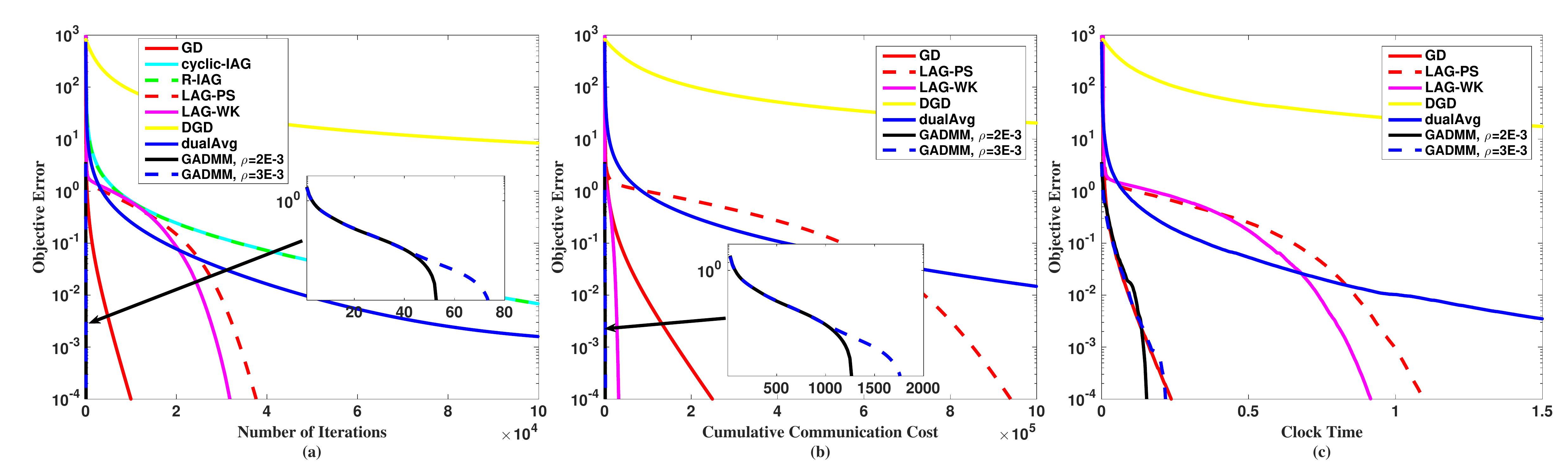}
\caption{ Objective error, total communication cost, and total running time comparison between GADMM and five benchmark algorithms, in \emph{logistic} regression with synthetic ($N=24$) datasets.}
\label{comp_logSynth}
\end{figure}

\begin{figure}
\centering
\includegraphics[trim=0.1in 0in 0.1in 0.1in, clip,  width=\textwidth]{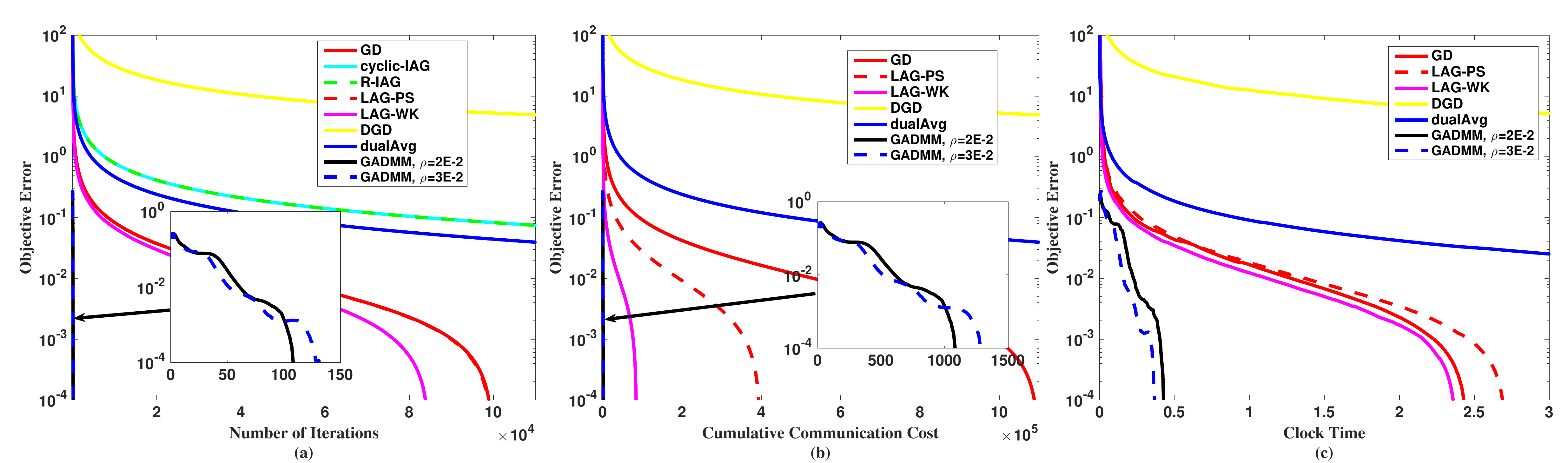}
\caption{ Objective error, total communication cost, and total running time comparison between GADMM and five benchmark algorithms, in \emph{logistic} regression with real ($N=10$) datasets.}
\label{comp_logReal}
\end{figure}

Fig.~\ref{gadmm_evalEx}-(a) and (b) demonstrate that GADMM is communication efficient under different network topologies. In fact, the TC calculations of GADMM in Table~1 and Fig.~\ref{comp_linSynth} rely on a unit communication cost for all communication links, i.e., $L_{n,t}^m = L_{n,t}^c=L_{\text{BC},t}^c=1$, which may not capture the communication efficiency of GADMM under a generic network topology. Instead, we use the consumed energy per communication iteration as the communication cost metric. We illustrate the cumulative distribution function (CDF) of TC by observing $1,\!000$ different network topologies. At the beginning of each observation, $24$ workers are randomly distributed over a $10\!\times\!10$ m$^2$ square area. In GADMM, the method described in Appendix \ref{chainConst} is used to construct the logical chain.  In centralized algorithms, the worker closest to the center becomes a central worker associating with all the other workers. We assume that the bandwidth is evenly distributed among users, and we also assume that each worker needs a bit rate of $10$Mbps to transmit its model in a one-time slot. Therefore, the communication cost per worker per iteration is the amount of energy that worker consumes to achieve the rate of $10$Mbps. Note that according to Shannon's formula, the achievable rate is a function of the bandwidth and power, \ie $R=B\cdot log_2(\frac{P}{d^2\cdot N_0\cdot B})$, where $B$ is the bandwidth, $P$ is the communication power, $N_0$ is the noise spectral density, and $d$ is the distance between the transmitter and the receiver \citep{mckeague1981capacity},  so we assume a free-space communication link. In our simulations, we assume, $B=2$MHz, $N_0=1E-6$, we find the required power (energy) to achieve $10$Mbps over link $l$ at time slot $t$, and that reflects the communication cost of using link $l$ at time slot $t$.


The CDF results in Fig.~\ref{gadmm_evalEx}-(a) and (b) show that with high probability, GADMM achieves much lower TC in both linear and logistic regression tasks for generic network topology, compared to other baseline algorithms. On the other hand, Fig.~\ref{gadmm_evalEx}-(c) validates that GADMM guarantees  consensus on the model parameters of all workers when training converges. Indeed, GADMM complies with the constraint $\boldsymbol{\theta}_n=\boldsymbol{\theta}_{n+1}$ in~\eqref{orig_c1}. We observe in Fig.~4-(c) that the average consensus constraint violation (ACV), defined as ${\sum_{n=1}^{N-1}\! |\boldsymbol{\theta}_n^{(k)} - \boldsymbol{\theta}_{n+1}^{(k)}|/N}$, goes to zero with the number of iterations. Specifically, AVC becomes $8\!\times\!10^{-7}$ after $495$ iterations at which the loss becomes $1\!\times\! 10^{-4}$. This underpins that GADMM is robust against its consensus violations temporarily at the early phase of training, thereby achieving the average consensus at the~end.

\begin{figure*}
\centering
\includegraphics[trim=0in 0in 0in 0in, clip, width=\textwidth, height=4cm]{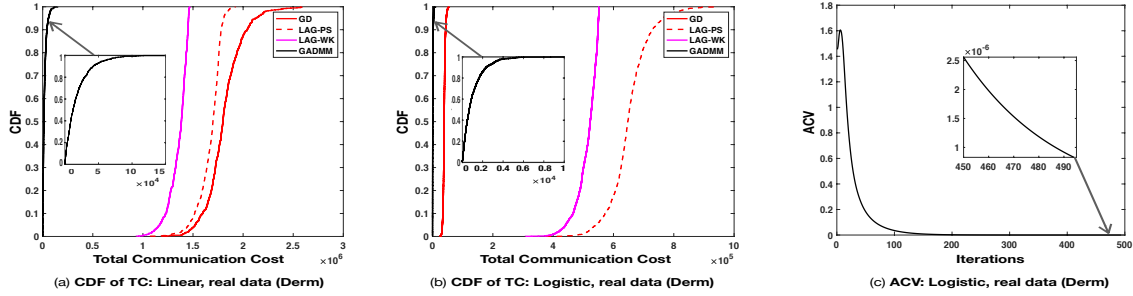}
\caption{The cumulative distribution function (CDF) of total communication cost (TC) in (a) linear and (b) logistic regression by uniformly randomly distributed $24$ workers with $1,\!000$ observations, and (c) the average consensus constraint violation (ACV) of GADMM in logistic regression by $4$ workers.}
\label{gadmm_evalEx}
\end{figure*}

We now extend GADMM to D-GADMM, and evaluate its performance under the time-varying network topology. One note to make, in simulating D-GADMM, we do not exchange dual variables between neighbors at every topology change as described in line 10, Algorithm \ref{alg2}. However, as we will show, D-GADMM still converges. Therefore, the extra communication overhead that might be encountered in D-GADMM when workers share their dual variables is avoided and the convergence is still preserved. We change the topology every $15$ iterations. Therefore, we assume that the system coherence time is 15 iterations. To simulate the change in the topology, $50$ workers are randomly distributed over a $250\!\times\!250$ m$^2$ square area every $15$-th iteration. D-GADMM uses the method described in appendix D which consumes 2 iterations (4 communication rounds) to build the chain. In contrast, GADMM keeps the logical worker connectivity graph unchanged even when the underlying physical topology changes. In linear regression with the synthetic dataset and $50$ workers, as observed in Fig.~\ref{dgadmm_eval}, even though D-GADMM consumes two iterations per topology change in building the chain, both the total number of iterations to achieve the objective error of $1E-4$ and the TC of D-GADMM are significantly reduced compared to GADMM. We observe that by changing the neighboring set of each worker more frequently, the convergence speed is significantly improved. Therefore, even for the static scenario in which the physical topology does not change, reconstructing the logical chain every few iterations can significantly improve the convergence speed. 

\begin{figure}
\centering
\includegraphics[trim=0.1in 0in 0.1in 0.1in, clip,  width=\textwidth]{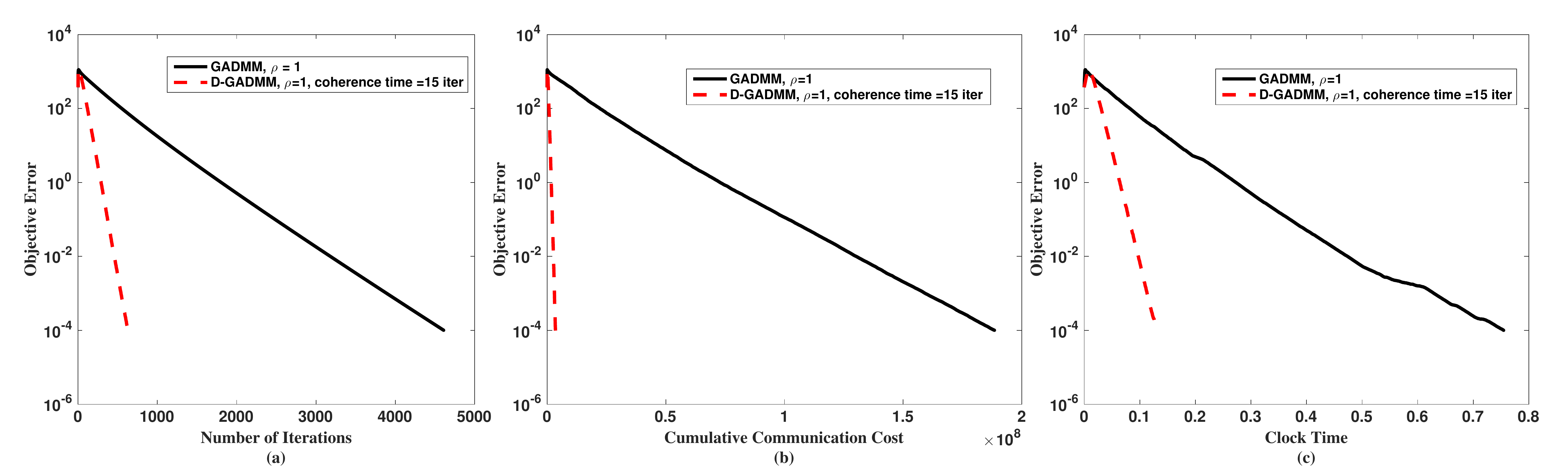}
\caption{ Objective error, total communication cost, and total running time of D-GADMM versus GADMM in linear regression with the synthetic dataset at $\rho=1$, $N=50$}
\label{dgadmm_eval}
\end{figure}

\begin{figure}
\centering
\includegraphics[trim=0.1in 0in 0.1in 0.1in, clip,  width=\textwidth]{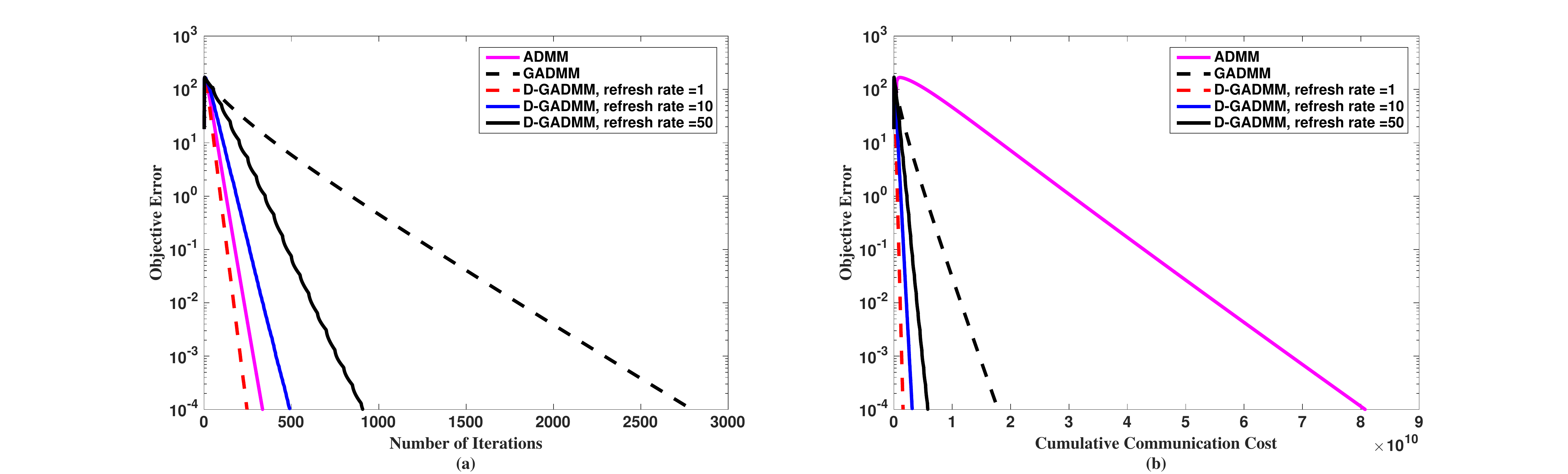}
\caption{ Objective error, total communication cost, and total running time of D-GADMM, GADMM, and Standared ADMM in linear regression with the synthetic dataset at $\rho=1$, $N=24$}
\label{gadmm_vs_admm}
\end{figure}

We finally compare both GADMM and D-GADMM with the standard ADMM which requires a parameter server (star topology). Since the topology does not change, we replace ``system coherence time'' with ``refresh rate''. Therefore, the objective of using  D-GADMM is not to adapt to topology changes, while to improve the convergence speed of GADMM. To compare the algorithms, we use $24$ workers ($N=24$), and we randomly drop them over a $250\!\times\!250$ m$^2$ square area. For standard ADMM, we use the worker that is closest to the center of the grid as the parameter server.


As observed from Fig.~\ref{gadmm_vs_admm}, compared to GADMM, standard ADMM requires fewer iterations to achieve the objective error of $1E-4$, but that comes at significantly higher communication cost as shown in~Fig.~\ref{gadmm_vs_admm}-(b) ($4$ times higher cost than GADMM). We show that by randomly changing the logical connectivity graph and utilizing D-GADMM, we can reduce the gap in the number of iterations between GADMM and standard ADMM and significantly reduce the communication cost. In fact, Fig.~\ref{gadmm_vs_admm} shows that by changing the logical graph every iteration, D-GADMM converges faster than standard ADMM and achieves a communication cost that is $40$ times less. It is worth mentioning that for static physical topology, changing the logical graph comes at zero cost since workers can agree on a predefined pseudorandom sequence in the graph changes. Therefore, every worker knows its neighbors in the next iteration.

\section{Conclusions and Future work}
\label{conclusion}
In this paper, we formulate a constrained optimization problem for distributed machine learning applications, and propose a novel decentralized algorithm based on ADMM, termed Group ADMM (GADMM) to solve this problem optimally for convex functions. GADMM is shown to maximize the communication efficiency of each worker. Extensive simulations in linear and logistic regression with synthetic and real datasets show significant improvements in convergence rate and communication overhead compared to the state-of-the-art algorithms. Furthermore, we extend GADMM to D-GADMM which accounts for time-varying network topologies. Both analysis and simulations confirm that D-GADMM achieves the same convergence guarantees as GADMM with lower communication overhead under the time-varying topology scenario. Constructing a communication-efficient logical chain may not always be possible; therefore, extending the algorithm to achieve a low communication overhead under an arbitrary topology could be an interesting topic for future study.

\appendix
\clearpage\newpage
\onecolumn
\section{Proof of Lemma \ref{lemma:first}}
\label{sec:lem1}
\emph{Proof of statement (i):}
We note that $f_n(\boldsymbol{\theta}_n)$ for all $n$ is closed, proper, and convex, hence $\boldsymbol{\mathcal{L}}_{\rho}$ is sub-differentiable. Since $\boldsymbol{\theta}_{n \in {\cal N}_h}^{k+1}$ minimizes $\boldsymbol{\mathcal{L}}_{\rho}( \boldsymbol{\theta}_{n \in {\cal N}_h},\boldsymbol{\theta}_{n\in {\cal N}_t}^{k}, \lamb_{n})$, the following must hold true at each iteration $k+1$
\begin{align}\label{first}
\boldsymbol{0} \in \partial f_n(\boldsymbol{\theta}_{n}^{k+1}) - \lamb_{n-1}^{k+1}+ \lamb_{n}^{k+1} +\bbs_{n}^{k+1}, n\in {\cal N}_h\setminus \{1\}
\end{align} 
\begin{align}\label{firstEdge}
\boldsymbol{0} \in \partial f_n(\boldsymbol{\theta}_{n}^{k+1}) + \lamb_{n}^{k+1} +\bbs_{n}^{k+1}, n=1
\end{align} 
Note that we use \eqref{firstEdge} for worker $1$ since it does not have a left neighbor (\ie  $\lamb_{0}^{k+1}$ is not defined). However, for simplicity and to avoid writing separate equations for the edge workers (workers $1$ and $N$), we use: $\lamb_{0}^{k+1}=\lamb_{N}^{k+1}=0$ throughout the rest of the proof. Therefore, we can use a single equation for each group (\eg equation \eqref{first} for $n\in {\cal N}_h$).

The result in \eqref{first} implies that $\boldsymbol{\theta}_{n}^{k+1}$  for $n\in {\cal N}_h$ minimizes the following convex objective function 
\begin{align}\label{second}
f_n(\boldsymbol{\theta}_{n}) + \ip{- \lamb_{n-1}^{k+1}+ \lamb_{n}^{k+1} + \bbs_{n \in {\cal N}_h}^{k+1},\boldsymbol{\theta}_n}.
\end{align}
Next, since $\boldsymbol{\theta}_{n}^{k+1}$ for $n\in {\cal N}_h$ is the minimizer of \eqref{second}, then, it holds that 
\begin{align}\label{third}
\!f_n(\boldsymbol{\theta}_{n}^{k+1}) + \ip{- \lamb_{n-1}^{k+1}+ \lamb_{n}^{k+1} + \bbs_{n\in {\cal N}_h}^{k+1},\boldsymbol{\theta}_{n}^{k+1}}\leq f_n(\boldsymbol{\theta}^\star)\! +\ip{- \lamb_{n-1}^{k+1}\!+\! \lamb_{n}^{k+1} \!+\! \bbs_{n \in {\cal N}_h}^{k+1},\boldsymbol{\theta}^\star} 
\end{align}  
where $\boldsymbol{\theta}^\star$ is the optimal value of the problem in \eqref{com_admm}-\eqref{com_admm_c1}. 
Similarly for $\boldsymbol{\theta}_{n}^{k+1}$ for $n\in {\cal N}_t$ satisfies \eqref{dual_feasibility} and it holds that 
\begin{align}\label{fourth}
& f_n(\boldsymbol{\theta}_{n}^{k+1}) + \ip{- \lamb_{n-1}^{k+1}+ \lamb_{n}^{k+1},\boldsymbol{\theta}_{n}^{k+1}} \leq f_n(\boldsymbol{\theta}^\star) + \ip{- \lamb_{n-1}^{k+1}+ \lamb_{n}^{k+1},\boldsymbol{\theta}^\star}.
\end{align}
Adding \eqref{third} and \eqref{fourth}, and then taking the summation over all the workers, we get
\begin{align}
\sum_{n=1}^N& f_n(\boldsymbol{\theta}_{n}^{k+1}) + \sum_{n\in {\cal N}_t}\ip{- \lamb_{n-1}^{k+1}+ \lamb_{n}^{k+1},\boldsymbol{\theta}_n^{k+1}} + \sum_{n\in {\cal N}_h}\ip{- \lamb_{n-1}^{k+1}+ \lamb_{n}^{k+1} + \bbs_{n \in {\cal N}_h}^{k+1},\boldsymbol{\theta}_n^{k+1}}\nonumber
\\
&\leq \sum_{n=1}^Nf_n(\boldsymbol{\theta}^\star)+ \sum_{n\in {\cal N}_t}\ip{- \lamb_{n-1}^{k+1}+ \lamb_{n}^{k+1},\boldsymbol{\theta}^\star}+ \sum_{n\in {\cal N}_h}\ip{- \lamb_{n-1}^{k+1}+ \lamb_{n}^{k+1} + \bbs_{n \in {\cal N}_h}^{k+1},\boldsymbol{\theta}^\star} 
\end{align}

After rearranging the terms, we get
\begin{align}\label{thrid}
\sum_{n=1}^Nf_n(\boldsymbol{\theta}_{n}^{k+1})-\sum_{n=1}^Nf_n(\boldsymbol{\theta}^\star)\leq&  \sum_{n\in {\cal N}_t}\ip{- \lamb_{n-1}^{k+1}+ \lamb_{n}^{k+1},\boldsymbol{\theta}^\star}+ \sum_{n\in {\cal N}_h}\ip{- \lamb_{n-1}^{k+1}+ \lamb_{n}^{k+1},\boldsymbol{\theta}^\star}\nonumber
\\
&  - \sum_{n\in {\cal N}_t}\ip{- \lamb_{n-1}^{k+1}+ \lamb_{n}^{k+1},\boldsymbol{\theta}_n^{k+1}} -\sum_{n\in {\cal N}_h}\ip{- \lamb_{n-1}^{k+1}+ \lamb_{n}^{k+1},\boldsymbol{\theta}_n^{k+1}}\nonumber
\\
&\quad+\sum_{n\in {\cal N}_h}\ip{\bbs_{n \in {\cal N}_h}^{k+1},\boldsymbol{\theta}^\star-\boldsymbol{\theta}_n^{k+1}}.
\end{align}
Note that,  
\begin{align}\label{fifth}
\sum_{n\in {\cal N}_h}\ip{- \lamb_{n-1}^{k+1}+ \lamb_{n}^{k+1},\boldsymbol{\theta}_n}=&\ip{\lamb_1^{k+1},{\bbtheta}_1}-\ip{\lamb_2^{k+1},\boldsymbol{\theta}_3}+\ip{\lamb_3^{k+1},\boldsymbol{\theta}_3}+\cdots
\nonumber
\\
&\cdots-\ip{\lamb_{N-2}^{k+1},\boldsymbol{\theta}_{N-1}}+\ip{\lamb_{N-1}^{k+1},\boldsymbol{\theta}_{N-1}},
\end{align}
and
\begin{align}\label{sixth}
\sum_{n\in {\cal N}_t}\ip{- \lamb_{n-1}^{k+1}+ \lamb_{n}^{k+1},\boldsymbol{\theta}_n}=&-\ip{\lamb_1^{k+1},\boldsymbol{\theta}_2}+\ip{\lamb_2^{k+1},\boldsymbol{\theta}_2}-\ip{\lamb_3^{k+1},\boldsymbol{\theta}_4}+\cdots\nonumber 
\\
&\cdots-\ip{\lamb_{N-1}^{k+1},\boldsymbol{\theta}_{N}}+\ip{\lamb_{N-1}^{k+1},\boldsymbol{\theta}_{N}}.
\end{align}
From \eqref{fifth} and \eqref{sixth}, at $\boldsymbol{\theta}_n^{k+1}$, it holds that
\begin{align}\label{proof_1}
\sum_{n\in {\cal N}_t}&\ip{- \lamb_{n-1}^{k+1}+ \lamb_{n}^{k+1},\boldsymbol{\theta}_n^{k+1}}+\sum_{n\in {\cal N}_h}\ip{- \lamb_{n-1}^{k+1}+ \lamb_{n}^{k+1},\boldsymbol{\theta}_n^{k+1}}\nonumber
\\
&= \ip{\lamb_1^{k+1},\boldsymbol{\theta}_1^{k+1}-\boldsymbol{\theta}_2^{k+1}}+\ip{\lamb_2^{k+1},\boldsymbol{\theta}_2^{k+1}-\boldsymbol{\theta}_3^{k+1}}+\cdots+\ip{\lamb_{N-1}^{k+1},\boldsymbol{\theta}_{N-1}^{k+1}-\boldsymbol{\theta}_N^{k+1}}\nonumber\\
&= \ip{\lamb_1^{k+1}, \bbr_{1,2}^{k+1}}+\ip{\lamb_2^{k+1},\bbr_{2,3}^{k+1}}+\cdots+\ip{\lamb_{N-1}^{k+1},\bbr_{N-1,N}^{k+1}},
\end{align}
where for the second equality, we have used the definition of primal residuals  defined after \eqref{feasiblity}. Similarly, it holds for $\boldsymbol{\theta}^\star$ that
\begin{align}
\hspace{0mm}\sum_{n\in {\cal N}_t}&\ip{- \lamb_{n-1}^{k+1}+ \lamb_{n}^{k+1},\boldsymbol{\theta}^\star}+\sum_{n\in {\cal N}_h}\ip{- \lamb_{n-1}^{k+1}+ \lamb_{n}^{k+1},\boldsymbol{\theta}^\star}\label{eq1a}
\\&= \ip{\lamb_1^{k+1},\boldsymbol{\theta}^\star}+\ip{\lamb_2^{k+1}-\lamb_1^{k+1},\boldsymbol{\theta}^\star}+\ip{\lamb_3^{k+1}-\lamb_2^{k+1},\boldsymbol{\theta}^\star}+\cdots+\ip{\lamb_{N}^{k+1}-\lamb_{N-1}^{k+1},\boldsymbol{\theta}^\star}\nonumber\\& = 0.\nonumber
\end{align}
The equality in \eqref{eq1a} holds since $\lamb_{N}^{k+1}=\boldsymbol{0}$. Next, substituting the results from \eqref{proof_1} and \eqref{eq1a} into \eqref{thrid}, we get
\begin{align}
&\sum_{n=1}^Nf_n(\boldsymbol{\theta}_{n}^{k+1})-\sum_{n=1}^Nf_n(\boldsymbol{\theta}^\star)\leq -\sum_{n=1}^{N-1}\ip{\lamb_n^{k+1},\bbr_{n,n+1}^{k+1}}+\sum_{n\in {\cal N}_h}\ip{\bbs_{n \in {\cal N}_h}^{k+1},\boldsymbol{\theta}^\star-\boldsymbol{\theta}_n^{k+1}},
\label{eq2a}
\end{align}
which concludes the proof of statement (i) of Lemma \ref{lemma:first}.

\emph{Proof of statement (ii):}
	The proof of this Lemma is along the similar line as in \cite[A.3]{boyd2011distributed} but is provided here for completeness.
	We note that for a saddle point  $(\boldsymbol{\theta}^\star,\{\lamb_{n}^\star\}_{\forall n})$ of   $\boldsymbol{\mathcal{L}}_{0}(\{\boldsymbol{\theta}_n\}_{\forall n}, \{\lamb_{n}\}_{\forall n})$, it holds that  
\begin{align}\label{temp}
	\boldsymbol{\mathcal{L}}_{0}( \boldsymbol{\theta}^\star, \{\lamb_{n}^\star\}_{\forall n} ) \leq \boldsymbol{\mathcal{L}}_{0}( \boldsymbol{\{\theta}_n^{k+1}\}\forall n, \{\lamb_{n}^\star\}\forall n )
	\end{align}
	for all $n$. Substituting the expression for the Lagrangian from \eqref{augmentedLag4} on the both sides of \eqref{temp}, we get
	\begin{align}	\sum_{n=1}^Nf_n(\boldsymbol{\theta}^\star)\leq \sum_{n=1}^Nf_n(\boldsymbol{\theta}_n^{k+1})+\sum_{n=1}^{N-1}\ip{\lamb_{n}^\star,\bbr_{n,n+1}^{k+1}}.
	\end{align}
	After rearranging the terms, we get 
	\begin{align}
	&\sum_{n=1}^Nf_n(\boldsymbol{\theta}_n^{k+1})-\sum_{n=1}^Nf_n(\boldsymbol{\theta}^\star)\geq -\sum_{n=1}^{N-1}\ip{\lamb_{n}^\star,\bbr_{n,n+1}^{k+1}}
	\label{eq3a}
	\end{align}
	which is the statement (ii) of Lemma \ref{lemma:first}.

\section{Proof of Theorem \ref{theorem}}
\label{sec:them1}
To proceed with the analysis, add \eqref{eq2a} and \eqref{eq3a}, multiply by $2$, we get 
\begin{align}
2\sum_{n=1}^{N-1}\ip{\lamb_n^{k+1}-\lamb_n^\star,\bbr_{n,n+1}^{k+1}}+2\sum_{n\in {\cal N}_h}\ip{\bbs_{n}^{k+1},\boldsymbol{\theta}_n^{k+1}-\boldsymbol{\theta}^\star}\leq 0.
\label{eq4a}
\end{align}
By applying, $\lamb_n^{k+1}=\lamb_n^{k}+\rho \bbr_{n,n+1}^{k+1}$ obtained from the dual update in \eqref{lambdaUpdateb}, \eqref{eq4a} can be recast as 

%
\begin{align}
&2\sum_{n=1}^{N-1}\ip{\lamb_n^{k}+\rho \bbr_{n,n+1}^{k+1}-\lamb_n^\star,\bbr_{n,n+1}^{k+1}}+2\sum_{n\in {\cal N}_h}\ip{\bbs_{n}^{k+1},\boldsymbol{\theta}_n^{k+1}-\boldsymbol{\theta}^\star}\leq 0.
\label{eq7a}
\end{align}
Note that the first term on the left hand side of \eqref{eq7a} can be written as
\begin{align}
\sum_{n=1}^{N-1}2\ip{\lamb_n^{k}-\lamb_n^\star,\bbr_{n,n+1}^{k+1}}+\rho  \norm{\bbr_{n,n+1}^{k+1}}^2 + \rho \norm{\bbr_{n,n+1}^{k+1}}^2.
\label{eq5a}
\end{align}
Replacing $\bbr_{n,n+1}^{k+1}$ in the first and second terms of \eqref{eq5a} with $\frac{\lamb_n^{k+1}-\lamb_n^k}{\rho}$, we get
\begin{align}
\sum_{n=1}^{N-1}({2}/{\rho})\ip{\lamb_n^{k}-\lamb_n^\star,\lamb_n^{k+1}-\lamb_n^k}+(1/\rho) \norm{\lamb_n^{k+1}-\lamb_n^k}^2 + \rho\norm{\bbr_{n,n+1}^{k+1}}^2.
\label{eq6a}
\end{align}
Using the equality $\lamb_n^{k+1}-\lamb_n^k= (\lamb_n^{k+1}-\lamb_n^\star)-(\lamb_n^{k}-\lamb_n^\star)$, we can rewrite \eqref{eq6a} as
\begin{align}
\sum_{n=1}^{N-1}&\!(2/\rho)\ip{\lamb_n^{k}\!-\!\lamb_n^\star,(\lamb_n^{k+1}\!\!-\!\lamb_n^\star)-(\lamb_n^{k}\!-\!\lamb_n^\star)}\!+\!(1/\rho) \norm{(\lamb_n^{k+1}\!-\!\lamb_n^\star)-(\lamb_n^{k}\!-\!\lamb_n^\star)}^2+ \rho\norm{\bbr_{n,n+1}^{k+1}}^2\nonumber
\\
=&\sum_{n=1}^{N-1}(2/\rho)\ip{\lamb_n^{k}-\lamb_n^\star,\lamb_n^{k+1}-\lamb_n^\star}-(2/\rho)\norm{\lamb_n^{k}-\lamb_n^\star}^2 + (1/\rho)\norm{ \lamb_n^{k+1}-\lamb_n^\star}^2\nonumber
\\
&\quad\quad-(2/\rho)\ip{\lamb_n^{k+1}-\lamb_n^\star,\lamb_n^{k}-\lamb_n^\star}+1/\rho\norm{\lamb_n^{k}-\lamb_n^\star}^2+\rho \norm{\bbr_{n,n+1}^{k+1}}^2
\\
=&\sum_{n=1}^{N-1}\Big[(1/\rho)\norm{\lamb_n^{k+1}-\lamb_n^\star}^2- (1/\rho)\norm{\lamb_n^{k}-\lamb_n^\star}^2+\rho\norm{\bbr_{n,n+1}^{k+1}}^2\Big]\label{term_1_ap}.
\end{align} 
Next, consider the second term on the left hand side of \eqref{eq7a}. From the equality \eqref{dualResidualEq}, it holds that 
\begin{align}\label{eq8a}
2\sum_{n\in {\cal N}_h}&\ip{\bbs_{n}^{k+1},\boldsymbol{\theta}_n^{k+1}-\boldsymbol{\theta}^\star}
\\
&=\sum_{n\in {\cal N}_h\setminus\{1\}}\Big(2\rho\ip{\boldsymbol{\theta}_{n-1}^{k+1}-\boldsymbol{\theta}_{n-1}^{k},\boldsymbol{\theta}_{n}^{k+1}-\boldsymbol{\theta}^\star}\Big)+\sum_{n\in {\cal N}_h}\Big(2\rho\ip{\boldsymbol{\theta}_{n+1}^{k+1}-\boldsymbol{\theta}_{n+1}^{k},\boldsymbol{\theta}_{n}^{k+1}-\boldsymbol{\theta}^\star}\Big).\nonumber
\end{align}
Note that $\boldsymbol{\theta}_n^{k+1}=-\bbr_{n-1,n}^{k+1}+\boldsymbol{\theta}_{n-1}^{k+1}=\bbr_{n,n+1}^{k+1}+\boldsymbol{\theta}_{n+1}^{k+1}, \forall n=\{2,\cdots,N-1\}$ , which implies that we can rewrite \eqref{eq8a} as follows
\begin{align}\label{42}
2\sum_{n\in {\cal N}_h}&\ip{\bbs_{n}^{k+1},\boldsymbol{\theta}_n^{k+1}-\boldsymbol{\theta}^\star}\nonumber
\\
=&\sum_{n\in {\cal N}_h\setminus \{1\}}\Big(-2\rho\ip{\boldsymbol{\theta}_{n-1}^{k+1}-\boldsymbol{\theta}_{n-1}^{k},\bbr_{n-1,n}^{k+1}}+2\rho\ip{\boldsymbol{\theta}_{n-1}^{k+1}-\boldsymbol{\theta}_{n-1}^{k},\boldsymbol{\theta}_{n-1}^{k+1}-\boldsymbol{\theta}^\star}\Big)\nonumber
\\
&\quad\quad\quad\quad+\sum_{n\in {\cal N}_h}\Big(2\rho\ip{\boldsymbol{\theta}_{n+1}^{k+1}-\boldsymbol{\theta}_{n+1}^{k},\bbr_{n,n+1}^{k+1}}+2\rho\ip{\boldsymbol{\theta}_{n+1}^{k+1}-\boldsymbol{\theta}_{n+1}^{k},\boldsymbol{\theta}_{n+1}^{k+1}-\boldsymbol{\theta}^\star}\Big).
\end{align}
Using the equalities,  
\begin{align}
\boldsymbol{\theta}_{n-1}^{k+1}-\boldsymbol{\theta}^\star =& (\boldsymbol{\theta}_{n-1}^{k+1}-\boldsymbol{\theta}_{n-1}^k)+(\boldsymbol{\theta}_{n-1}^{k}-\boldsymbol{\theta}^\star), \forall n\in {\cal N}_h \setminus\{1\}\nonumber
\\
\boldsymbol{\theta}_{n+1}^{k+1}-\boldsymbol{\theta}^\star =& (\boldsymbol{\theta}_{n+1}^{k+1}-\boldsymbol{\theta}_{n+1}^k)+(\boldsymbol{\theta}_{n+1}^{k}-\boldsymbol{\theta}^\star), \forall n\in {\cal N}_h
\end{align} we rewrite the right hand side of \eqref{42} as 
\begin{align} \label{new}
&\sum_{n\in {\cal N}_h \setminus\{1\}}\Big(- 2\rho\ip{\boldsymbol{\theta}_{n-1}^{k+1}-\boldsymbol{\theta}_{n-1}^{k},\bbr_{n-1,n}^{k+1}}+2\rho\ip{\boldsymbol{\theta}_{n-1}^{k+1}-\boldsymbol{\theta}_{n-1}^{k},(\boldsymbol{\theta}_{n-1}^{k+1}-\boldsymbol{\theta}_{n-1}^k)+(\boldsymbol{\theta}_{n-1}^{k}-\boldsymbol{\theta}^\star)}\Big)\nonumber
\\
&\quad\quad\quad+\sum_{n\in {\cal N}_h}\Big(2\rho\ip{\boldsymbol{\theta}_{n+1}^{k+1}-\boldsymbol{\theta}_{n+1}^{k},\bbr_{n,n+1}^{k+1}}
+2\rho\ip{\boldsymbol{\theta}_{n+1}^{k+1}-\boldsymbol{\theta}_{n+1}^{k},(\boldsymbol{\theta}_{n}^{k+1}-\boldsymbol{\theta}_{n+1}^k)+(\boldsymbol{\theta}_{n+1}^{k}-\boldsymbol{\theta}^\star)}\Big) \nonumber
\\
=&\sum_{n\in {\cal N}_h\setminus\{1\}}\Big(-2\rho\ip{\boldsymbol{\theta}_{n-1}^{k+1}-\boldsymbol{\theta}_{n-1}^{k},\bbr_{n-1,n}^{k+1}}+2\rho\norm{\boldsymbol{\theta}_{n-1}^{k+1} -  \boldsymbol{\theta}_{n-1}^k}^2+2\rho\ip{\boldsymbol{\theta}_{n-1}^{k+1}-\boldsymbol{\theta}_{n-1}^{k},\boldsymbol{\theta}_{n-1}^{k}-\boldsymbol{\theta}^\star}\Big) \nonumber
\\
&\quad\quad\quad+\sum_{n\in {\cal N}_h}\Big(2\rho\ip{\boldsymbol{\theta}_{n+1}^{k+1}-\boldsymbol{\theta}_{n+1}^{k},\bbr_{n,n+1}^{k+1}}+2\rho\norm{\boldsymbol{\theta}_{n+1}^{k+1} -  \boldsymbol{\theta}_{n+1}^k}^2+2\rho\ip{\boldsymbol{\theta}_{n+1}^{k+1}-\boldsymbol{\theta}_{n+1}^{k},\boldsymbol{\theta}_{n+1}^{k}-\boldsymbol{\theta}_{n+1}^\star}\Big).
\end{align}
Further using the equalities
\begin{align}
\boldsymbol{\theta}_{n-1}^{k+1}-\boldsymbol{\theta}_{n-1}^k =& (\boldsymbol{\theta}_{n-1}^{k+1}-\boldsymbol{\theta}^\star)-(\boldsymbol{\theta}_{n-1}^{k}-\boldsymbol{\theta}^\star), \forall n\in {\cal N}_h \setminus\{1\}
\nonumber 
\\
\boldsymbol{\theta}_{n+1}^{k+1}-\boldsymbol{\theta}_{n+1}^k =& (\boldsymbol{\theta}_{n+1}^{k+1}-\boldsymbol{\theta}^\star)-(\boldsymbol{\theta}_{n+1}^{k}-\boldsymbol{\theta}^\star), \forall n\in {\cal N}_h
\end{align}
we can write \eqref{new} as 
\begin{align}
&\!\!\sum_{n\in {\cal N}_h\setminus \{1\}}\!\!\!\Big(\!-2\rho\ip{\boldsymbol{\theta}_{n-1}^{k+1}-\boldsymbol{\theta}_{n-1}^{k},\bbr_{n-1,n}^{k+1}}\!+\!2\rho\norm{\boldsymbol{\theta}_{n-1}^{k+1} -  \boldsymbol{\theta}_{n-1}^k}^2\!
\!\!+\!\!2\rho\ip{(\boldsymbol{\theta}_{n-1}^{k+1}\!-\!\boldsymbol{\theta}^\star)\!-\!(\boldsymbol{\theta}_{n-1}^{k}\!-\!\boldsymbol{\theta}^\star),\boldsymbol{\theta}_{n-1}^{k}\!-\!\boldsymbol{\theta}^\star}\Big)\nonumber
\\
&\quad\quad+\sum_{n\in {\cal N}_h}\Big(2\rho\ip{\boldsymbol{\theta}_{n+1}^{k+1}-\boldsymbol{\theta}_{n+1}^{k},\bbr_{n,n+1}^{k+1}}+2\rho\norm{\boldsymbol{\theta}_{n+1}^{k+1} -  \boldsymbol{\theta}_{n+1}^k}^2\nonumber
\\
&\quad\quad\quad+2\rho\ip{(\boldsymbol{\theta}_{n+1}^{k+1}-\boldsymbol{\theta}^\star)-(\boldsymbol{\theta}_{n+1}^{k}-\boldsymbol{\theta}^\star),\boldsymbol{\theta}_{n+1}^{k}-\boldsymbol{\theta}^\star}\Big) 
\\
=&\sum_{n\in {\cal N}_h\setminus \{1\}}\!\!\!\Big(\!-2\rho\ip{\boldsymbol{\theta}_{n-1}^{k+1}-\boldsymbol{\theta}_{n-1}^{k},\bbr_{n-1,n}^{k+1}}\!+\!2\rho\norm{\boldsymbol{\theta}_{n-1}^{k+1} -  \boldsymbol{\theta}_{n-1}^k}^2\!
\!+2\rho\ip{\boldsymbol{\theta}_{n-1}^{k+1}\!-\!\boldsymbol{\theta}^\star,\boldsymbol{\theta}_{n-1}^{k}\!-\!\boldsymbol{\theta}^\star}\nonumber
\\
&\quad\quad-2\rho\norm{\boldsymbol{\theta}_{n-1}^{k}\!-\!\boldsymbol{\theta}^\star}^2\Big)+\sum_{n\in {\cal N}_h}\Big(2\rho\ip{\boldsymbol{\theta}_{n+1}^{k+1}-\boldsymbol{\theta}_{n+1}^{k},\bbr_{n,n+1}^{k+1}}+2\rho\norm{\boldsymbol{\theta}_{n+1}^{k+1} -  \boldsymbol{\theta}_{n+1}^k}^2\nonumber
\\
&\quad\quad\quad+2\rho\ip{\boldsymbol{\theta}_{n+1}^{k+1}-\boldsymbol{\theta}^\star,\boldsymbol{\theta}_{n+1}^{k}-\boldsymbol{\theta}^\star}-2\rho\norm{\boldsymbol{\theta}_{n+1}^{k} -  \boldsymbol{\theta}^\star}^2\Big).
\end{align} 
After rearranging the terms, we can write
\begin{align}
=&\sum_{n\in {\cal N}_h \setminus\{1\}}\Big(-2\rho\ip{\boldsymbol{\theta}_{n-1}^{k+1}-\boldsymbol{\theta}_{n-1}^{k},\bbr_{n-1,n}^{k+1}}+\rho \norm{\boldsymbol{\theta}_{n-1}^{k+1} -  \boldsymbol{\theta}_{n-1}^k}^2+\rho\norm{(\boldsymbol{\theta}_{n-1}^{k+1} -  \boldsymbol{\theta}^\star)-(\boldsymbol{\theta}_{n-1}^{k} -  \boldsymbol{\theta}^\star)}^2\nonumber
\\
&\quad\quad\quad+2\rho\ip{\boldsymbol{\theta}_{n-1}^{k+1}-\boldsymbol{\theta}^\star,\boldsymbol{\theta}_{n-1}^{k}-\boldsymbol{\theta}^\star}-2\rho\parallel \boldsymbol{\theta}_{n-1}^{k} -  \boldsymbol{\theta}^\star\parallel_2^2\Big)+\sum_{n\in {\cal N}_h}\Big( 2\rho\ip{\boldsymbol{\theta}_{n+1}^{k+1}-\boldsymbol{\theta}_{n+1}^{k},\bbr_{n,n+1}^{k+1}}\nonumber
\\
&\quad\quad\quad\quad+\rho\norm{\boldsymbol{\theta}_{n+1}^{k+1} -  \boldsymbol{\theta}_{n+1}^k}^2+\rho\norm{(\boldsymbol{\theta}_{n+1}^{k+1} -  \boldsymbol{\theta}^\star)-(\boldsymbol{\theta}_{n+1}^{k} -  \boldsymbol{\theta}^\star)}^2\nonumber 
\\
&\quad\quad\quad\quad\quad+2\rho\ip{\boldsymbol{\theta}_{n+1}^{k+1}-\boldsymbol{\theta}^\star,\boldsymbol{\theta}_{n+1}^{k}-\boldsymbol{\theta}^\star}-2\rho\norm{\boldsymbol{\theta}_{n+1}^{k} -  \boldsymbol{\theta}^\star}^2\Big).\label{here}
\end{align}
Next, expanding the square terms in \eqref{here}, we get 
\begin{align} \label{term_2_ap}
2\sum_{n\in {\cal N}_h}& \ip{\bbs_{n}^{k+1},\boldsymbol{\theta}_n^{k+1}-\boldsymbol{\theta}^\star}
\nonumber\\
=&\sum_{n\in {\cal N}_h\setminus\{1\}}\Big(-2\rho\ip{\boldsymbol{\theta}_{n-1}^{k+1}-\boldsymbol{\theta}_{n-1}^{k},\bbr_{n-1,n}^{k+1}}+\rho\norm{\boldsymbol{\theta}_{n-1}^{k+1} -  \boldsymbol{\theta}_{n-1}^k}^2
\\
&+\rho\norm{\boldsymbol{\theta}_{n-1}^{k+1} \!-\!  \boldsymbol{\theta}^\star}^2\!\!-\!\rho\norm{ \boldsymbol{\theta}_{n-1}^{k} -  \boldsymbol{\theta}^\star}^2\Big)+\sum_{n\in {\cal N}_h} \Big(2\rho\ip{\boldsymbol{\theta}_{n+1}^{k+1}-\boldsymbol{\theta}_{n+1}^{k},\bbr_{n,n+1}^{k+1}}\nonumber
\\
&\quad+\rho\norm{ \boldsymbol{\theta}_{n+1}^{k+1} -  \boldsymbol{\theta}_{n+1}^k}^2+\rho\norm{\boldsymbol{\theta}_{n+1}^{k+1} -  \boldsymbol{\theta}^\star}^2-\rho\norm{\boldsymbol{\theta}_{n+1}^{k} -  \boldsymbol{\theta}^\star}^2\Big).\nonumber
\end{align} 
Substituting the equalities from \eqref{term_1_ap} and \eqref{term_2_ap} to the left hand side of \eqref{eq7a}, we obtain
\begin{align}
\sum_{n=1}^{N-1}&\Big[(1/\rho)\norm{\lamb_n^{k+1}-\lamb_n^\star}^2- (1/\rho)\norm{\lamb_n^{k}-\lamb_n^\star}^2+\rho\norm{\bbr_{n,n+1}^{k+1}}^2\Big]\nonumber
\\
&+\sum_{n\in {\cal N}_h \setminus\{1\}}\Big(-2\rho\ip{\boldsymbol{\theta}_{n-1}^{k+1}-\boldsymbol{\theta}_{n-1}^{k},\bbr_{n-1,n}^{k+1}}+\rho\norm{\boldsymbol{\theta}_{n-1}^{k+1} -  \boldsymbol{\theta}_{n-1}^k}^2
\\
&\quad+\rho\norm{\boldsymbol{\theta}_{n-1}^{k+1} \!-\!  \boldsymbol{\theta}^\star}^2\!\!-\!\rho\norm{ \boldsymbol{\theta}_{n-1}^{k} -  \boldsymbol{\theta}^\star}^2\Big)+\sum_{n\in {\cal N}_h}\Big(2\rho\ip{\boldsymbol{\theta}_{n+1}^{k+1}-\boldsymbol{\theta}_{n+1}^{k},\bbr_{n,n+1}^{k+1}}\nonumber
\\
&\quad\quad+\rho\norm{ \boldsymbol{\theta}_{n+1}^{k+1} -  \boldsymbol{\theta}_{n+1}^k}^2+\rho\norm{\boldsymbol{\theta}_{n+1}^{k+1} -  \boldsymbol{\theta}^\star}^2-\rho\norm{\boldsymbol{\theta}_{n+1}^{k} -  \boldsymbol{\theta}^\star}^2\Big) \leq 0.
\end{align}
Multiplying both the sides by $-1$, we get 
\begin{align}\label{new_2}
\sum_{n=1}^{N-1}&\Big[-(1/\rho)\norm{\lamb_n^{k+1}-\lamb_n^\star}^2+ (1/\rho)\norm{\lamb_n^{k}-\lamb_n^\star}^2-\rho\norm{\bbr_{n,n+1}^{k+1}}^2\Big]\nonumber
\\
&-\sum_{n\in {\cal N}_h\setminus\{1\}}\Big(-2\rho\ip{\boldsymbol{\theta}_{n-1}^{k+1}-\boldsymbol{\theta}_{n-1}^{k},\bbr_{n-1,n}^{k+1}}+\rho\norm{\boldsymbol{\theta}_{n-1}^{k+1} -  \boldsymbol{\theta}_{n-1}^k}^2
\\
&\quad+\rho\norm{\boldsymbol{\theta}_{n-1}^{k+1} \!-\!  \boldsymbol{\theta}^\star}^2\!\!-\!\rho\norm{ \boldsymbol{\theta}_{n-1}^{k} -  \boldsymbol{\theta}^\star}^2\Big)+\sum_{n\in {\cal N}_h}\Big(2\rho\ip{\boldsymbol{\theta}_{n+1}^{k+1}-\boldsymbol{\theta}_{n+1}^{k},\bbr_{n,n+1}^{k+1}}\nonumber
\\
&\quad\quad+\rho\norm{ \boldsymbol{\theta}_{n+1}^{k+1} -  \boldsymbol{\theta}_{n+1}^k}^2+\rho\norm{\boldsymbol{\theta}_{n+1}^{k+1} -  \boldsymbol{\theta}^\star}^2-\rho\norm{\boldsymbol{\theta}_{n+1}^{k} -  \boldsymbol{\theta}^\star}^2\Big)\geq 0,
\end{align}
After rearranging the terms in \eqref{new_2} and using the definition of the Lyapunov function in \eqref{lyapEq}, we get

\begin{align}
V_{k+1}\leq V_k-&
\sum_{n=1}^{N-1}\rho \norm{\bbr_{n,n+1}^{k+1}}^2- \Big[\sum_{n\in {\cal N}_h\setminus\{1\}} \rho\norm{ \boldsymbol{\theta}_{n-1}^{k+1} -  \boldsymbol{\theta}_{n-1}^k}^2+\sum_{n\in {\cal N}_h} \rho\norm{ \boldsymbol{\theta}_{n+1}^{k+1} -  \boldsymbol{\theta}_{n+1}^k}^2\Big]\nonumber
\\
 - &\Big[\sum_{n\in {\cal N}_h\setminus\{1\}}-2\rho\ip{\boldsymbol{\theta}_{n-1}^{k+1}-\boldsymbol{\theta}_{n-1}^{k},\bbr_{n-1,n}^{k+1}}+\sum_{n\in {\cal N}_h}2\rho\ip{\boldsymbol{\theta}_{n+1}^{k+1}-\boldsymbol{\theta}_{n+1}^{k},\bbr_{n,n+1}^{k+1}}\Big].
\label{mainIneq}
\end{align}
 In order to prove that $k+1$ is a one step towards the optimal solution or the Lyapunov function decreases monotonically at each iteration, we need to show that the sum of the inner product terms on the right hand side of the inequality is positive. In other words, we need to prove that the term $\sum_{n\in {\cal N}_h\setminus\{1\}}-2\rho\ip{\boldsymbol{\theta}_{n-1}^{k+1}-\boldsymbol{\theta}_{n-1}^{k},\bbr_{n-1,n}^{k+1}}+\sum_{n\in {\cal N}_h}2\rho\ip{\boldsymbol{\theta}_{n+1}^{k+1}-\boldsymbol{\theta}_{n+1}^{k},\bbr_{n,n+1}^{k+1}}$ is always positive. Note that this term can be written as. 
\begin{align}\label{term}
\sum_{n\in {\cal N}_h\setminus\{1\}}&-2\rho\ip{\boldsymbol{\theta}_{n-1}^{k+1}-\boldsymbol{\theta}_{n-1}^{k},\bbr_{n-1,n}^{k+1}}+\sum_{n\in {\cal N}_h}2\rho\ip{\boldsymbol{\theta}_{n+1}^{k+1}-\boldsymbol{\theta}_{n+1}^{k},\bbr_{n,n+1}^{k+1}}
\\
=& 2\rho \Big[\ip{\bbr_{1,2}^{k+1},\boldsymbol{\theta}_2^{k+1}-\boldsymbol{\theta}_2^{k}}- \ip{\bbr_{2,3}^{k+1},\boldsymbol{\theta}_2^{k+1}-\boldsymbol{\theta}_2^{k}}+\ip{\bbr_{3,4}^{k+1},\boldsymbol{\theta}_4^{k+1}-\boldsymbol{\theta}_4^{k}} - \ip{\bbr_{4,5}^{k+1},\boldsymbol{\theta}_4^{k+1}-\boldsymbol{\theta}_4^{k}}+\nonumber
\\
&\hspace{9cm}\cdots+\bbr_{N-1,N}^{k+1}(\boldsymbol{\theta}_N^{k+1}-\boldsymbol{\theta}_N^{k})\Big]\nonumber\\
=& 2\rho\ip{\bbr_{1,2}^{k+1}-\bbr_{2,3}^{k+1},\boldsymbol{\theta}_2^{k+1}-\boldsymbol{\theta}_2^{k}}+2\rho\ip{\bbr_{3,4}^{k+1}-\bbr_{4,5}^{k+1},\boldsymbol{\theta}_4^{k+1}-\boldsymbol{\theta}_4^{k}}+\nonumber
\\
&\hspace{9cm}\cdots+2\rho\ip{\bbr_{N-1,N}^{k+1},\boldsymbol{\theta}_N^{k+1}-\boldsymbol{\theta}_N^{k}}\nonumber.
\end{align}
We know that $\boldsymbol{\theta}_{n\in {\cal N}_t}^{k+1}$ minimizes $f_n(\boldsymbol{\theta}_n) + \ip{-\lamb_{n-1}^{k+1} + \lamb_n^{k+1},\boldsymbol{\theta}_n}$; hence it holds that 
\begin{align}
& f_n(\boldsymbol{\theta}_n^{k+1}) + \ip{-\lamb_{n-1}^{k+1} + \lamb_n^{k+1},\boldsymbol{\theta}_n^{k+1}}\leq f_n(\boldsymbol{\theta}_n^k) + \ip{-\lamb_{n-1}^{k+1} + \lamb_n^{k+1},\boldsymbol{\theta}_n^k},
\label{ineq1}
\end{align}
Similarly,  $\boldsymbol{\theta}_{n\in {\cal N}_t}^{k}$ minimizes $f_n(\boldsymbol{\theta}_n) + \ip{-\lamb_{n-1}^{k} + \lamb_n^{k},\boldsymbol{\theta}_n}$, which implies that 
\begin{align}
& f_n(\boldsymbol{\theta}_n^k) + \ip{-\lamb_{n-1}^{k} + \lamb_n^{k},\boldsymbol{\theta}_n^{k}}\leq f_n(\boldsymbol{\theta}_n^{k+1}) + \ip{-\lamb_{n-1}^{k} + \lamb_n^{k},\boldsymbol{\theta}_n^{k+1}}.
\label{ineq2}
\end{align}
Adding \eqref{ineq1} and \eqref{ineq2}, we get
\begin{align}
\ip{(-\lamb_{n-1}^{k+1} + \lamb_n^{k+1})-(-\lamb_{n-1}^{k} + \lamb_n^{k}),\boldsymbol{\theta}_n^{k+1}-\boldsymbol{\theta}_n^k} \leq 0.
\end{align}
Further after rearranging, we get 
\begin{align}
\ip{(\lamb_{n-1}^{k}-\lamb_{n-1}^{k+1}) + (\lamb_{n}^{k+1}-\lamb_{n}^{k}),\boldsymbol{\theta}_n^{k+1}-\boldsymbol{\theta}_n^k} \leq 0.
\label{ineq3}
\end{align}
By knowing that $\bbr_{n-1,n}^{k+1}=(1/\rho)(\lamb_{n-1}^{k+1} - \lamb_{n-1}^{k})$ and $\bbr_{n,n+1}^{k+1}=(1/\rho)(\lamb_{n}^{k+1} - \lamb_{n}^{k})$, \eqref{ineq3} can be written as
\begin{align}
-\rho\ip{\bbr_{n-1,n}^{k+1}-\bbr_{n,n+1}^{k+1},\boldsymbol{\theta}_n^{k+1}-\boldsymbol{\theta}_n^k} \leq 0, \forall n \in {\cal N}_t.
\label{ineq4}
\end{align}
The above inequality implies that
\begin{align}
\rho\ip{\bbr_{n-1,n}^{k+1}-\bbr_{n,n+1}^{k+1},\boldsymbol{\theta}_n^{k+1}-\boldsymbol{\theta}_n^k} \geq 0, \forall n \in {\cal N}_t.
\label{ineq40}
\end{align}
Note that since worker $N$ does not have a right neighbor, $\bbr_{N,N+1}^{k+1}=\lamb_{N}^{k+1} =\lamb_{N}^{k} =0$.

Next, for $\rho >0$. Using the inequality in \eqref{ineq40} into  \eqref{term}, we get 

\begin{align}\label{term2}
\sum_{n\in {\cal N}_h\setminus\{1\}}&-2\rho\ip{\boldsymbol{\theta}_{n-1}^{k+1}-\boldsymbol{\theta}_{n-1}^{k},\bbr_{n-1,n}^{k+1}}+\sum_{n\in {\cal N}_h}2\rho\ip{\boldsymbol{\theta}_{n+1}^{k+1}-\boldsymbol{\theta}_{n+1}^{k},\bbr_{n,n+1}^{k+1}}\geq 0. 
\end{align}

Next, we use the result in \eqref{term2} into \eqref{mainIneq} to get 
\begin{align}
V_{k+1}\leq V_k-&
\sum_{n=1}^{N-1}\rho \norm{\bbr_{n,n+1}^{k+1}}^2- \Big[\sum_{n\in {\cal N}_h\setminus\{1\}} \rho\norm{ \boldsymbol{\theta}_{n-1}^{k+1} -  \boldsymbol{\theta}_{n-1}^k}^2+\sum_{n\in {\cal N}_h}\rho\norm{ \boldsymbol{\theta}_{n+1}^{k+1} -  \boldsymbol{\theta}_{n+1}^k}^2\Big].
\label{mainIneq2}
\end{align}
The result in \eqref{mainIneq2} proves that $V_{k
+1}$ decreases with $k$. Now, since $V_k\geq 0$ and $V_k\leq V_0$, it holds that $\Bigg[
\sum_{n=1}^{N-1}\rho \norm{\bbr_{n,n+1}^{k+1}}^2+ \Big[\sum_{n\in {\cal N}_h\setminus\{1\}} \rho\norm{ \boldsymbol{\theta}_{n-1}^{k+1} -  \boldsymbol{\theta}_{n-1}^k}^2+\sum_{n\in {\cal N}_h}\rho\norm{ \boldsymbol{\theta}_{n+1}^{k+1} -  \boldsymbol{\theta}_{n+1}^k}^2\Big]\Bigg].$ is bounded. Taking the telescopic sum over $k$ in \eqref{mainIneq2} as limit $K\rightarrow\infty$, we get 
\begin{align}
\lim_{K\rightarrow\infty}\sum\limits_{k=0}^{K}\Bigg[
\sum_{n=1}^{N-1}\rho \norm{\bbr_{n,n+1}^{k+1}}^2+ \Big[\sum_{n\in {\cal N}_h\setminus\{1\}} \rho\norm{ \boldsymbol{\theta}_{n-1}^{k+1} -  \boldsymbol{\theta}_{n-1}^k}^2+\sum_{n\in {\cal N}_h}\rho\norm{ \boldsymbol{\theta}_{n+1}^{k+1} -  \boldsymbol{\theta}_{n+1}^k}^2\Big]\Bigg] \leq V_0.
\label{mainIneq3}
\end{align}
The result in \eqref{mainIneq3} implies that  the primal residual $\bbr_{n,n+1}^{k+1}\rightarrow \boldsymbol{0}$ as $k\rightarrow\infty$  for all $n \in \{1,\cdots, N-1\}$ completing the proof of statement (i) in Theorem \ref{theorem}.  Similarly, the norm differences $\norm{ \boldsymbol{\theta}_{n-1}^{k+1} -  \boldsymbol{\theta}_{n-1}^k}$ and $\norm{ \boldsymbol{\theta}_{n+1}^{k+1} -  \boldsymbol{\theta}_{n+1}^k}\rightarrow\boldsymbol{0}$ as $k\rightarrow\infty$ which implies that the dual residual $\bbs_{n}^{k}\rightarrow\boldsymbol{0}$ as $k\rightarrow \infty$ for all $n\in {\cal N}_h$ stated in the statement (ii) of Theorem \ref{theorem}.  In order to prove the statement (iii) of Theorem \ref{theorem}, consider the lower and the upper bounds on the objective function optimality gap given by 
\begin{align}
\sum_{n=1}^N[f_n(\boldsymbol{\theta}_{n}^{k+1})-f_n(\boldsymbol{\theta}^\star)]&\leq -\sum_{n=1}^{N-1}\ip{\lamb_{n}^{k+1},\bbr_{n,n+1}^{k+1}}+\sum_{n\in {\cal N}_h}\ip{\bbs_{n}^{k+1},\boldsymbol{\theta}^\star-\boldsymbol{\theta}_n^{k+1}}
\label{lem1Eq23}\\
\sum_{n=1}^N[f_n(\boldsymbol{\theta}_{n}^{k+1})-f_n(\boldsymbol{\theta}^\star)]&\geq -\sum_{n=1}^{N-1}\ip{\lamb_{n}^\star,\bbr_{n,n+1}^{k+1}}
	\label{eq3a3}.
\end{align}
Note that from the results in statement (i) and (ii) of Theorem \ref{theorem}, it holds that the right hand side of the upper bound in \eqref{lem1Eq23} converge to zero as $k\rightarrow\infty$ and also the right hand side of the lower bound in \eqref{eq3a3} converges to zero as $k\rightarrow\infty$. This implies that 
\begin{align}
\lim_{k\rightarrow\infty}\sum_{n=1}^N[f_n(\boldsymbol{\theta}_{n}^{k+1})-f_n(\boldsymbol{\theta}^\star)] =0
\end{align}
which is the statement (iii) of Theorem \ref{theorem}.

\section{Method for D-GADMM Chain Construction}
\label{chainConst}
To ensure that a chain in a given graph is found in a decentralized way, we use the following method.
\begin{itemize}
\item The $N$ workers ($N$ is assumed to be even) share a pseudorandom code that is used every $\tau$ seconds, where $\tau$ is the system coherence time, to generate a set ${\cal H}$ containing $(N/2-2)$ integer numbers, with each number chosen from the set $\{2,\cdots,N-1\}$. In other words, we assume that the topology changes every $\tau$ seconds. Note that the generated numbers at $i\cdot\tau$ and $(i+1)\cdot\tau$ time slots may differ. However, at the $i\cdot\tau$-th time slot, the same set of numbers is generated across workers with no communication.

\item Every worker with physical index $n \in \big\{{\cal H}\cup \{1\}\big\}$ is assigned to the head set. Note that the worker whose physical index $1$ is always assigned to the head set. On the other hand, every worker with physical index $n$ such that $n \notin \big\{{\cal H}\cup \{1\}\big\}$ assigns itself to the tail set. Therefore, the worker whose physical index $N$ is always assigned to the tail set. Following this strategy, the number of heads will be equal to the number of tails, and are both equal to $N/2$. 

\item Every worker in the head set broadcasts its physical index alongside a pilot signal. Pilot signal is a signal known to each worker. It is used to measure the signal strength and find neighboring workers.

\item Every worker in the tail set calculates its cost of communication to every head based on the received signal strength. For example, the communication cost between head $n$ and tail $m$ is equal to [1/power of the received signal from head $n$ to tail $m$], in which the link with lower received signal level is more costly, as it is incuring higher transmission power. 

\item Every worker in the tail set broadcasts a vector of length $N/2$, containing the communication cost to the $N/2$ heads, \ie the first element in the vector captures the communication cost between this tail and worker $1$, since worker $1$ is always in the head set, whereas the second element represents the communication cost between this tail and the first index in the head set ${\cal H}$ and so on. 

\item Once head worker $n \in \big\{{\cal H}\cup \{1\}\big\}$ receives the communication cost vector from tail workers, it finds a communication-efficient chain that starts from worker $1$ and passes through every other worker to reach worker $N$. In our simulations, we use the following simple and greedy strategy that is performed by every head to ensure they generate the same chain. The strategy is as follows: 
\begin{itemize}

\item Find the tail that has the minimum communication cost to worker $1$ and create a link between this tail and worker $1$.
\item From the remaining set of heads, find the head that has the minimum communication cost to this tail and create a link between this head and the corresponding tail. 
\item Follow the same strategy until all workers are connected. When every head follows this strategy, all heads will generate the same chain. 
\item Under the following two assumptions: (i) the communication cost between any pair of workers is $<\infty$, and (ii) no two tails have equal communication cost to the same head, this strategy guarantees that every head will generate the same chain.
\end{itemize}

\item Once every head finds its chain, all neighbors share their current models, and D-GADMM is carried out for $\tau$ seconds using the current chain.
\end{itemize}
Note that, the described heuristic requires $4$ communication rounds ($2$ iterations). Finally, it is worth mentioning that  this approach has no guarantee to find the most communication-efficient chain. As mentioned in section \ref{dgadmm}, our focus in this paper is not to design the chain construction algorithm.

\section{Convergence Analysis of D-GADMM}\label{DGADMMproof}
For the dynamic settings, we assume that the first $n=1$ and the last node $n=N$ are fixed and the others can move at each iterate. Therefore, we denote the neighbors to each node $n$ at iteration $k$ as $n_{l,k}$ and $n_{r,k}$ as the left and the right neighbors , respectively. Note that when, $n_{l,k}=n-1$ and $n_{r,k}=n+1$ for all $k$, we recover the GADMM implementation.  With that in mind, we start by writing the augmented Lagrangian of the optimization problem in \eqref{com_admm}-\eqref{com_admm_c1} at each iteration $k$ as
\begin{align}
\boldsymbol{\mathcal{L}}_{\rho}^k(\{\boldsymbol{\theta}_n\}_{n=1}^N,\boldsymbol{\lambda})=&\sum_{n=1}^N f_n(\boldsymbol{\theta}_n) + \sum_{n=1}^{N-1} \ip{\lamb _n, \boldsymbol{\theta}_{n} - \boldsymbol{\theta}_{n_{r,k}}}+ \frac{\rho}{2}  \sum_{n=1}^{N-1} \| \boldsymbol{\theta}_{n} - \boldsymbol{\theta}_{n+1}\|^2, 
\label{augmentedLag41}
\end{align}
where $\boldsymbol{\lambda}:=[\lamb_1^T, \cdots, \lamb_{N-1}^T]^T$~is the collection of dual variables. Note that the set of nodes in head $\mathcal{N}_h^k$ and tail $\mathcal{N}_t^k$ will change with $k$.6 The primal and dual variables under GADMM are updated in the following three steps. 
The modified algorithm updates are written as 

\begin{enumerate}
\item At iteration $k+1$, the \emph{primal variables of head workers} are updated as:
\begin{align}
{\boldsymbol{\theta}}_{n}^{k+1} =\arg\min_{\bbtheta_n}\big[f_n(\boldsymbol{\theta}_n) +&\ip{\bblambda_{n_{l,k}}^{k}, \boldsymbol{\theta}_{n_{l,k}}^{k} - \boldsymbol{\theta}_{n}}+\ip{\lamb_{n}^{k}, \boldsymbol{\theta}_{n} - \boldsymbol{\theta}_{n_{r,k}}^{k}} + \frac{\rho}{2} \| \boldsymbol{\theta}_{n_{l,k}}^{k} - \boldsymbol{\theta}_{n}\|^2\nonumber
\\
+& \frac{\rho}{2} \| \boldsymbol{\theta}_{n} - \boldsymbol{\theta}_{n_{r,k}}^{k}\|^2\big], n \in {\cal N}_h\setminus\{1\}
\label{headUpdate1}
\end{align}
Since the first head worker ($n=1$) does not have a left neighbor ($\boldsymbol{\theta}_{n-1}$ is not defined), its model is updated as follows.
\begin{align}
{\boldsymbol{\theta}}_{n}^{k+1} =\arg\min_{\bbtheta_n}\big[f_n(\boldsymbol{\theta}_n)&+\ip{\lamb_{n}^{k}, \boldsymbol{\theta}_{n} - \boldsymbol{\theta}_{n_{r,k}}^{k}} 
+\frac{\rho}{2} \| \boldsymbol{\theta}_{n} - \boldsymbol{\theta}_{n_{l,k}}^{k}\|^2\big], n=1
\label{headUpdateEdge1}
\end{align}

\item After the updates in \eqref{headUpdate1} and \eqref{headUpdateEdge1}, head workers send their updates to their two tail neighbors. The \emph{primal variables of tail workers} are then updated as:
\begin{align}
{\boldsymbol{\theta}}_{n}^{k+1} =\arg\min_{\boldsymbol{\theta}_n} \big[f_n(\boldsymbol{\theta}_n) +& \ip{\bblambda_{n_{l,k}}^{k}, \boldsymbol{\theta}_{n_{l,k}}^{k+1}-\boldsymbol{\theta}_{n}}
+\ip{\lamb_{n}^{k}, \boldsymbol{\theta}_{n} - \boldsymbol{\theta}_{n_{r,k}}^{k+1}}+ \frac{\rho}{2} \| \boldsymbol{\theta}_{n_{l,k}}^{k+1} - \boldsymbol{\theta}_{n}\|^2\nonumber\\ +& \frac{\rho}{2} \| \boldsymbol{\theta}_{n} - \boldsymbol{\theta}_{n_{r,k}}^{k+1}\|^2\big], n \in {\cal N}_t\setminus\{N\}.
\label{tailUpdate1}
\end{align}
Since the last tail worker ($n=N$) does not have a right neighbor ($\boldsymbol{\theta}_{n+1}$ is not defined), its model is updated as follows
\begin{align}
{\boldsymbol{\theta}}_{n}^{k+1} =\arg\min_{\boldsymbol{\theta}_n} \big[f_n(\boldsymbol{\theta}_n) +& \ip{\bblambda_{n_{l,k}}^{k}, \boldsymbol{\theta}_{n_{l,k}}^{k+1}-\boldsymbol{\theta}_{n}}
+ \frac{\rho}{2} \| \boldsymbol{\theta}_{n_{l,k}}^{k+1} - \boldsymbol{\theta}_{n}\|^2\big], n=N.
\label{tailUpdateEdge1}
\end{align}

\item After receiving the updates from neighbors, \emph{every worker locally updates its dual variables} $\lamb_{n-1}$~and $\lamb_{n}$~as follows
\begin{align}
{\lamb}_n^{k+1}={\lamb}_n^{k} + \rho({\boldsymbol{\theta}}_{n}^{k+1} - {\boldsymbol{\theta}}_{n_{r,k}}^{k+1}), n=\{1,\cdots,N-1\}.
\label{lambdaUpdateb1}
\end{align}
\end{enumerate}
Note that when the topology changes, ${\lamb}_n^{k} $ of worker $n$ is received from the left neighbor $n_{l,k}$ before updating ${\lamb}_n^{k+1}$ according to \eqref{lambdaUpdateb1}.  For the proof, we start with the necessary and sufficient optimality conditions, which are the primal and the dual feasibility conditions \citep{boyd2011distributed} for each $k$ are defined as 
\begin{align}\label{primal_feasiblity1}
\boldsymbol{\theta}_n^\star =& \boldsymbol{\theta}_{n_{l,k}}^\star, n \in \{2,\cdots,N\} \quad \quad\quad\quad\quad\quad \quad \text{(primal feasibility)}
\end{align} 
\begin{align}\label{dual_feasibility1}
&\boldsymbol{0} \in \partial f_n(\boldsymbol{\theta}_n^\star) - \bblambda_{n_{l,k}}^\star + \lamb_{n}^\star, n \in \{2,\cdots,N-1\}\nonumber\\
&\boldsymbol{0} \in \partial f_n(\boldsymbol{\theta}_n^\star) + \lamb_{n}^\star, n=1
 \quad \quad\quad\quad\quad\quad \quad \quad \quad \text{(dual feasibility)}\nonumber\\
 &\boldsymbol{0} \in \partial f_n(\boldsymbol{\theta}_n^\star) + \lamb_{n-1}^\star, n=N
\quad \quad\quad\quad\quad\quad \quad \quad \quad
\end{align}
We remark that the  optimal values $\bbtheta_n^\star$ are equal for each $n$, we denote $\boldsymbol{\theta}^\star=\boldsymbol{\theta}_n^\star = \boldsymbol{\theta}_{n-1}^\star$ for all $n$. Note that, at iteration $k+1$, we calculate $\boldsymbol{\theta}_{n}^{k+1}$ for all $n\in {\cal N}_t^k\setminus \{N\}$ as in \eqref{tailUpdate}, from the first order optimality condition, it holds that  
\begin{align}
\boldsymbol{0} &\in \partial f_n(\boldsymbol{\theta}_{n}^{k+1}) - \bblambda_{n_{l,k}}^{k} + \lamb_{n}^{k} + \rho (\boldsymbol{\theta}_{n}^{k+1}- \boldsymbol{\theta}_{n_{l,k}}^{k+1})+ \rho (\boldsymbol{\theta}_{n}^{k+1} - \boldsymbol{\theta}_{n_{r,k}}^{k+1}).
\label{eq31}
\end{align}
Next, rewrite the equation in \eqref{eq31} as
\begin{align}
\boldsymbol{0} &\in \partial f_n(\boldsymbol{\theta}_{n}^{k+1}) - \big(\bblambda_{n_{l,k}}^{k} + \rho (\boldsymbol{\theta}_{n_{l,k}}^{k+1} - \boldsymbol{\theta}_{n}^{k+1})\big)+ \left(\lamb_{n}^{k} + \rho (\boldsymbol{\theta}_{n}^{k+1}- \boldsymbol{\theta}_{n_{r,k}}^{k+1})\right).
\label{eq41}
\end{align}
From the update in \eqref{lambdaUpdateb1}, the equation in
\eqref{eq41} implies that
\begin{align}
\boldsymbol{0} \in \partial f_n(\boldsymbol{\theta}_{n}^{k+1}) - \lamb_{n_{l,k}}^{k+1}+ \lamb_{n}^{k+1}, n\in {\cal N}_t^k\setminus \{N\}.
\label{slaveEq1}
\end{align}
Note that for the $N$-th worker, we calculate $\boldsymbol{\theta}_{N}^{k+1}$ as in \eqref{tailUpdateEdge}, then we follow the same steps, and we get 
\begin{align}
\boldsymbol{0} \in \partial f_N(\boldsymbol{\theta}_{N}^{k+1}) - \lamb_{N_{l,k}}^{k+1}.
\label{slaveEqEdge1}
\end{align}
From the result in \eqref{slaveEq1} and \eqref{slaveEqEdge1}, it holds that the dual feasibility condition in \eqref{dual_feasibility1} is always satisfied for all $n  \in {\cal N}_t
^k$. 

Next, consider every $\boldsymbol{\theta}_{n}^{k+1}$ such that $n\in {\cal N}_h^k\setminus\{1\}$ which is calculated as in \eqref{headUpdate1} at iteration $k$. Similarly from the first order optimality condition, we can write 
\begin{align}
\boldsymbol{0} &\in \partial f_n(\boldsymbol{\theta}_{n}^{k+1}) - \bblambda_{n_{l,k}}^{k} + \lamb_{n}^{k} + \rho (\boldsymbol{\theta}_{n}^{k+1} - \boldsymbol{\theta}_{n_{l,k}}^{k})+ \rho (\boldsymbol{\theta}_{n}^{k+1} - \boldsymbol{\theta}_{n_{r,k}}^{k})\label{feasi1}.
\end{align}
Note that in \eqref{feasi1}, we don't have all the primal variables calculated at $k+1$~instance. Hence, we add and subtract the terms $\boldsymbol{\theta}_{n_{l,k}}^{k+1}$ and $\boldsymbol{\theta}_{n_{r,k}}^{k+1}$ in \eqref{feasi1} to get 

\begin{align}
\boldsymbol{0} \in &\;\partial  f_n(\boldsymbol{\theta}_{n}^{k+1}) - \big(\bblambda_{n_{l,k}}^{k}+ \rho(\boldsymbol{\theta}_{n_{l,k}}^{k+1} - \boldsymbol{\theta}_{n}^{k+1})\big)+ \left(\lamb_{n}^{k} +\rho(\boldsymbol{\theta}_{n}^{k+1} - \boldsymbol{\theta}_{n_{r,k}}^{k+1})\right) \nonumber 
\\
&
+\rho(\boldsymbol{\theta}_{n_{l,k}}^{k+1}-\boldsymbol{\theta}_{n_{l,k}}^{k})+\rho(\boldsymbol{\theta}_{n_{r,k}}^{k+1}-\boldsymbol{\theta}_{n_{r,k}}^{k}).
\end{align}
From the update in \eqref{lambdaUpdateb1}, it holds that
\begin{align}\label{feasiblity1}
\boldsymbol{0} \in \partial & f_n(\boldsymbol{\theta}_{n}^{k+1}) - \lamb_{n_{l,k}}^{k+1}+ \lamb_{n}^{k+1}+\rho(\boldsymbol{\theta}_{n_{l,k}}^{k+1}-\boldsymbol{\theta}_{n_{l,k}}^{k})+\rho(\boldsymbol{\theta}_{n_{r,k}}^{k+1}-\boldsymbol{\theta}_{n_{r,k}}^{k}).
\end{align}
%
Following the same steps for the first head worker ($n=1$) after excluding the terms $\bblambda_{0}^{k}$ and $\rho (\boldsymbol{\theta}_{1}^{k+1} - \boldsymbol{\theta}_{0}^{k})$ from $\eqref{feasi1}$ (worker $1$ does not have a left neighbor) gives
\begin{align}\label{feasiblityEdge1}
\boldsymbol{0} \in \partial & f_1(\boldsymbol{\theta}_{1}^{k+1}) + \lamb_{1}^{k+1}+\rho(\boldsymbol{\theta}_{1_{r,k}}^{k+1}-\boldsymbol{\theta}_{1_{r,k}}^{k}).
\end{align}
Let $\bbs_{n}^{k+1}$, the dual residual of worker $n\in {\cal N}_h^k$~at iteration $k+1$, be defined as follows
\begin{equation}\label{dualResidualEq1}
\bbs_{n}^{k+1}
=\left\{\begin{array}{l}
\rho(\boldsymbol{\theta}_{n_{l,k}}^{k+1}-\boldsymbol{\theta}_{n_{l,k}}^{k})+\rho(\boldsymbol{\theta}_{n_{r,k}}^{k+1}-\boldsymbol{\theta}_{n_{r,k}}^{k}), \text{ for } n\in {\cal N}_h^k\setminus\{1\}\\
\rho(\boldsymbol{\theta}_{n_{r,k}}^{k+1}-\boldsymbol{\theta}_{n_{r,k}}^{k}),  \text{ for }  n = 1.
\end{array}\right.
\end{equation}
Next, we discuss about the primal feasibility condition in \eqref{primal_feasiblity1} at iteration $k+1$.  Let $\bbr_{n,n_{r,k}}^{k+1}= \boldsymbol{\theta}_{n}^{k+1}-\boldsymbol{\theta}_{n_{r,k}}^{k+1}$ be the primal residual of each worker~$n \in \{1,\cdots,N-1\}$. To show the convergence of GADMM, we need to prove that the conditions in \eqref{primal_feasiblity1}-\eqref{dual_feasibility1} are satisfied for each worker $n$.  We have already shown that the dual feasibility condition in \eqref{dual_feasibility1} is always satisfied for the tail workers, and the dual residual of tail workers is always zero. Therefore, to prove the convergence and the optimality of GADMM, we need to show that the $\bbr_{n, n_{r,k}}^{k}$ for all $n=1, \cdots, N-1$ and $\bbs_{n \in {\cal N}_h^k}^{k}$ converge to zero, and $\sum_{n=1}^Nf_n(\boldsymbol{\theta}_n^{k})$~converges to $\sum_{n=1}^Nf_n(\boldsymbol{\theta}^\star)$ as $k\rightarrow\infty$. We proceed as follows to prove the same. 

We note that $f_n(\boldsymbol{\theta}_n)$ for all $n$ is closed, proper, and convex, hence $\boldsymbol{\mathcal{L}}_{\rho}^k$ is sub-differentiable. Since $\boldsymbol{\theta}_{n}^{k+1}$  for $n \in {\cal N}_h^k$ at $k$ minimizes $\boldsymbol{\mathcal{L}}_{\rho}^k( \boldsymbol{\theta}_{n \in {\cal N}_h},\boldsymbol{\theta}_{n\in {\cal N}_t}^{k}, \lamb_{n})$, the following must hold true at each iteration $k+1$, which implies that 
\begin{align}\label{first11}
\boldsymbol{0} \in \partial f_n(\boldsymbol{\theta}_{n}^{k+1}) - \lamb_{n_{l,k}}^{k+1}+ \lamb_{n}^{k+1} +\bbs_{n}^{k+1}, n\in {\cal N}_h^k\setminus \{1\}
\end{align} 
\begin{align}\label{firstEdge11}
\boldsymbol{0} \in \partial f_1(\boldsymbol{\theta}_{1}^{k+1}) + \lamb_{1}^{k+1} +\bbs_{1}^{k+1}, n=1
\end{align} 
Note that we use \eqref{firstEdge11} for worker $1$ since it does not have a left neighbor (\ie  $\lamb_{0}^{k+1}$ is not defined). However, for simplicity and to avoid writing separate equations for the edge workers (workers $1$ and $N$), we use: $\lamb_{0}^{k+1}=\lamb_{N}^{k+1}=0$ throughout the rest of the proof. Therefore, we can use a single equation for each group (\eg equation \eqref{first} for $n\in {\cal N}_h^k$).

The result in \eqref{first11} implies that $\boldsymbol{\theta}_{n}^{k+1}$  for $n\in {\cal N}_h^k$ minimizes the following convex objective function 
\begin{align}\label{second11}
f_n(\boldsymbol{\theta}_{n}) + \ip{- \lamb_{n_{l,k}}^{k+1}+ \lamb_{n}^{k+1} + \bbs_{n }^{k+1},\boldsymbol{\theta}_n}.
\end{align}
Next, since $\boldsymbol{\theta}_{n}^{k+1}$ for $n\in {\cal N}_h^k$ is the minimizer of \eqref{second11}, then, it holds that 
\begin{align}\label{third11}
\!f_n(\boldsymbol{\theta}_{n}^{k+1}) + \ip{- \lamb_{n_{l,k}}^{k+1}+ \lamb_{n}^{k+1} + \bbs_{n}^{k+1},\boldsymbol{\theta}_{n}^{k+1}}\leq f_n(\boldsymbol{\theta}^\star)\! +\ip{- \lamb_{n_{l,k}}^{k+1}\!+\! \lamb_{n}^{k+1} \!+\! \bbs_{n }^{k+1},\boldsymbol{\theta}^\star} 
\end{align}  
where $\boldsymbol{\theta}^\star$ is the optimal value of the problem in \eqref{com_admm}-\eqref{com_admm_c1}. 
Similarly for $\boldsymbol{\theta}_{n}^{k+1}$ for $n\in {\cal N}_t^k$ satisfies \eqref{dual_feasibility1} and it holds that 
\begin{align}\label{fourth11}
& f_n(\boldsymbol{\theta}_{n}^{k+1}) + \ip{- \lamb_{n_{l,k}}^{k+1}+ \lamb_{n}^{k+1},\boldsymbol{\theta}_{n}^{k+1}} \leq f_n(\boldsymbol{\theta}^\star) + \ip{- \lamb_{n_{l,k}}^{k+1}+ \lamb_{n}^{k+1},\boldsymbol{\theta}^\star}.
\end{align}
Add \eqref{third11} and \eqref{fourth11}, and then take the summation over all the workers, note that for a given $k$, the topology in the network is fixed, we get
\begin{align}
\sum_{n=1}^N& f_n(\boldsymbol{\theta}_{n}^{k+1}) + \sum_{n\in {\cal N}_t^k}\ip{- \lamb_{n_{l,k}}^{k+1}+ \lamb_{n}^{k+1},\boldsymbol{\theta}_n^{k+1}} + \sum_{n\in {\cal N}_h^k}\ip{- \lamb_{n_{l,k}}^{k+1}+ \lamb_{n}^{k+1} + \bbs_{n }^{k+1},\boldsymbol{\theta}_n^{k+1}}\nonumber
\\
&\leq \sum_{n=1}^Nf_n(\boldsymbol{\theta}^\star)+ \sum_{n\in {\cal N}_t^k}\ip{- \lamb_{n_{l,k}}^{k+1}+ \lamb_{n}^{k+1},\boldsymbol{\theta}^\star}+ \sum_{n\in {\cal N}_h^k}\ip{- \lamb_{n_{l,k}}^{k+1}+ \lamb_{n}^{k+1} + \bbs_{n }^{k+1},\boldsymbol{\theta}^\star} 
\end{align}

After rearranging the terms, we get
\begin{align}\label{thrid11}
\sum_{n=1}^Nf_n(\boldsymbol{\theta}_{n}^{k+1})-\sum_{n=1}^Nf_n(\boldsymbol{\theta}^\star)\leq&  \sum_{n\in {\cal N}_t^k}\ip{- \lamb_{n_{l,k}}^{k+1}+ \lamb_{n}^{k+1},\boldsymbol{\theta}^\star}+ \sum_{n\in {\cal N}_h^k}\ip{- \lamb_{n_{l,k}}^{k+1}+ \lamb_{n}^{k+1},\boldsymbol{\theta}^\star}\nonumber
\\
&  - \sum_{n\in {\cal N}_t^k}\ip{- \lamb_{n_{l,k}}^{k+1}+ \lamb_{n}^{k+1},\boldsymbol{\theta}_n^{k+1}} -\sum_{n\in {\cal N}_h^k}\ip{- \lamb_{n_{l,k}}^{k+1}+ \lamb_{n}^{k+1},\boldsymbol{\theta}_n^{k+1}}\nonumber
\\
&\quad+\sum_{n\in {\cal N}_h^k}\ip{\bbs_{n }^{k+1},\boldsymbol{\theta}^\star-\boldsymbol{\theta}_n^{k+1}}.
\end{align}
Note that we can write  

\begin{align}\label{proof_111}
\sum_{n\in {\cal N}_t^t}&\ip{- \lamb_{n_{l,k}}^{k+1}+ \lamb_{n}^{k+1},\boldsymbol{\theta}_n^{k+1}}+\sum_{n\in {\cal N}_h^k}\ip{- \lamb_{n_{l,k}}^{k+1}+ \lamb_{n}^{k+1},\boldsymbol{\theta}_n^{k+1}}= \sum\limits_{n=1}^{N-1}\ip{\lamb_n^{k+1}, \bbr_{n,n_r^k}^{k+1}},
\end{align}
where for the equality, we have used the definition of primal residuals  defined after \eqref{feasiblity1}. Similarly, it holds for $\boldsymbol{\theta}^\star$ as
\begin{align}
\hspace{0mm}\sum_{n\in {\cal N}_t^k}&\ip{- \lamb_{n_{l,k}}^{k+1}+ \lamb_{n}^{k+1},\boldsymbol{\theta}^\star}+\sum_{n\in {\cal N}_h^k}\ip{- \lamb_{n_{l,k}}^{k+1}+ \lamb_{n}^{k+1},\boldsymbol{\theta}^\star}= 0.\label{eq1a11}
\end{align}
The equality in \eqref{eq1a11} holds since $\lamb_{N}^{k+1}=\boldsymbol{0}$. Next, substituting the results from \eqref{proof_111} and \eqref{eq1a11} into \eqref{thrid11}, we get
\begin{align}
&\sum_{n=1}^Nf_n(\boldsymbol{\theta}_{n}^{k+1})-\sum_{n=1}^Nf_n(\boldsymbol{\theta}^\star)\leq -\sum_{n=1}^{N-1}\ip{\lamb_n^{k+1},\bbr_{n,n_{r,k}}^{k+1}}+\sum_{n\in {\cal N}_h^k}\ip{\bbs_{n }^{k+1},\boldsymbol{\theta}^\star-\boldsymbol{\theta}_n^{k+1}},
\label{eq2a11}
\end{align}
which provides an upper bound on the optimality gap. Next, we get the lower bound as follows.
	We note that for a saddle point  $(\boldsymbol{\theta}^\star,\{\lamb_{n}^\star\}_{\forall n})$ of   $\boldsymbol{\mathcal{L}}_{0}(\{\boldsymbol{\theta}_n\}_{\forall n}, \{\lamb_{n}\}_{\forall n})$, it holds that  
	\begin{align}\label{temp11}
	\boldsymbol{\mathcal{L}}_{0}( \boldsymbol{\theta}^\star, \{\lamb_{n}^\star\}_{\forall n} ) \leq \boldsymbol{\mathcal{L}}_{0}( \{\boldsymbol{\theta}_n^{k+1}\}_{\forall n}, \{\lamb_{n}^\star\}_{\forall n} ).
	\end{align}
	Substituting the expression for the Lagrangian from \eqref{augmentedLag41} on the both sides of \eqref{temp11}, we get
	\begin{align}	\sum_{n=1}^Nf_n(\boldsymbol{\theta}^\star)\leq \sum_{n=1}^Nf_n(\boldsymbol{\theta}_n^{k+1})+\sum_{n=1}^{N-1}\ip{\lamb_{n}^\star,\bbr_{n,n_{r,k}}^{k+1}}.
	\end{align}
	After rearranging the terms, we get 
	\begin{align}
	&\sum_{n=1}^Nf_n(\boldsymbol{\theta}_n^{k+1})-\sum_{n=1}^Nf_n(\boldsymbol{\theta}^\star)\geq -\sum_{n=1}^{N-1}\ip{\lamb_{n}^\star,\bbr_{n,n_{r,k}}^{k+1}}
	\label{eq3a11}
	\end{align}
	which provide the lower bound on the optimality gap. Next, we show that both the lower and upper bound converges to zero as $\rightarrow\infty$. This would  prove that the optimality gap converges to zero with $k\rightarrow\infty$. 

%
To proceed with the analysis, add \eqref{eq2a11} and \eqref{eq3a11}, multiply by $2$, we get 
\begin{align}
2\sum_{n=1}^{N-1}\ip{\lamb_n^{k+1}-\lamb_n^\star,\bbr_{n,n_{r,k}}^{k+1}}+2\sum_{n\in {\cal N}_h^k}\ip{\bbs_{n}^{k+1},\boldsymbol{\theta}_n^{k+1}-\boldsymbol{\theta}^\star}\leq 0.
\label{eq4a11}
\end{align}
From the dual update in \eqref{lambdaUpdateb1}, we have $\lamb_n^{k+1}=\lamb_n^{k}+\rho \bbr_{n,n_{r,k}}^{k+1}$ and \eqref{eq4a11} can be written as 
\begin{align}
&2\sum_{n=1}^{N-1}\ip{\lamb_n^{k}+\rho \bbr_{n,n_{r,k}}^{k+1}-\lamb_n^\star,\bbr_{n,n_{r,k}}^{k+1}}+2\sum_{n\in {\cal N}_h^k}\ip{\bbs_{n}^{k+1},\boldsymbol{\theta}_n^{k+1}-\boldsymbol{\theta}^\star}\leq 0.
\label{eq7a11}
\end{align}
Note that the first term on the left hand side of \eqref{eq7a11} can be written as
\begin{align}
\sum_{n=1}^{N-1}2\ip{\lamb_n^{k}-\lamb_n^\star,\bbr_{n,n_{r,k}}^{k+1}}+\rho  \norm{\bbr_{n,n_{r,k}}^{k+1}}^2 + \rho \norm{\bbr_{n,n_{r,k}}^{k+1}}^2.
\label{eq5a11}
\end{align}
Replacing $\bbr_{n,n_{r,k}}^{k+1}$ in the first and second terms of \eqref{eq5a11} with $\frac{\lamb_n^{k+1}-\lamb_n^k}{\rho}$, we get
\begin{align}
\sum_{n=1}^{N-1}({2}/{\rho})\ip{\lamb_n^{k}-\lamb_n^\star,\lamb_n^{k+1}-\lamb_n^k}+(1/\rho) \norm{\lamb_n^{k+1}-\lamb_n^k}^2 + \rho\norm{\bbr_{n,n_{r,k}}^{k+1}}^2.
\label{eq6a11}
\end{align}
Using the equality $\lamb_n^{k+1}-\lamb_n^k= (\lamb_n^{k+1}-\lamb_n^\star)-(\lamb_n^{k}-\lamb_n^\star)$, we can write \eqref{eq6a11} as
\begin{align}
\sum_{n=1}^{N-1}&\!(2/\rho)\ip{\lamb_n^{k}\!-\!\lamb_n^\star,(\lamb_n^{k+1}\!\!-\!\lamb_n^\star)-(\lamb_n^{k}\!-\!\lamb_n^\star)}\!+\!(1/\rho) \norm{(\lamb_n^{k+1}\!-\!\lamb_n^\star)-(\lamb_n^{k}\!-\!\lamb_n^\star)}^2+ \rho\norm{\bbr_{n,n_{r,k}}^{k+1}}^2\nonumber
\\
=&\sum_{n=1}^{N-1}(2/\rho)\ip{\lamb_n^{k}-\lamb_n^\star,\lamb_n^{k+1}-\lamb_n^\star}-(2/\rho)\norm{\lamb_n^{k}-\lamb_n^\star}^2 + (1/\rho)\norm{ \lamb_n^{k+1}-\lamb_n^\star}^2\nonumber
\\
&\quad\quad-(2/\rho)\ip{\lamb_n^{k+1}-\lamb_n^\star,\lamb_n^{k}-\lamb_n^\star}+1/\rho\norm{\lamb_n^{k}-\lamb_n^\star}^2+\rho \norm{\bbr_{n,n_{r,k}}^{k+1}}^2
\\
=&\sum_{n=1}^{N-1}\Big[(1/\rho)\norm{\lamb_n^{k+1}-\lamb_n^\star}^2- (1/\rho)\norm{\lamb_n^{k}-\lamb_n^\star}^2+\rho\norm{\bbr_{n,n_{r,k}}^{k+1}}^2\Big]\label{term_1_ap11}.
\end{align} 
Next, consider the second term on the left hand side of \eqref{eq7a11}, from the equality \eqref{dualResidualEq1}, it holds that 
\begin{align}\label{eq8a11}
2\sum_{n\in {\cal N}_h^k}&\ip{\bbs_{n}^{k+1},\boldsymbol{\theta}_n^{k+1}-\boldsymbol{\theta}^\star}
\\
&=\sum_{n\in {\cal N}_h\setminus\{1\}}\Big(2\rho\ip{\boldsymbol{\theta}_{n_{l,k}}^{k+1}-\boldsymbol{\theta}_{n_{l,k}}^{k},\boldsymbol{\theta}_{n}^{k+1}-\boldsymbol{\theta}^\star}\Big)+\sum_{n\in {\cal N}_h}\Big(2\rho\ip{\boldsymbol{\theta}_{n_{r,k}}^{k+1}-\boldsymbol{\theta}_{n_{r,k}}^{k},\boldsymbol{\theta}_{n}^{k+1}-\boldsymbol{\theta}^\star}\Big).\nonumber
\end{align}
Note that $\boldsymbol{\theta}_n^{k+1}=-\bbr_{n_{l,k},n}^{k+1}+\boldsymbol{\theta}_{n_{l,k}}^{k+1}=\bbr_{n,n_{r,k}}^{k+1}+\boldsymbol{\theta}_{n_{r,k}}^{k+1}, \forall n=\{2,\cdots,N-1\}$ , which implies that we can rewrite the equation in \eqref{eq8a11} as follows
\begin{align}\label{4211}
2\sum_{n\in {\cal N}_h^k}&\ip{\bbs_{n}^{k+1},\boldsymbol{\theta}_n^{k+1}-\boldsymbol{\theta}^\star}\nonumber
\\
=&\sum_{n\in {\cal N}_h^k\setminus \{1\}}\Big(-2\rho\ip{\boldsymbol{\theta}_{n_{l,k}}^{k+1}-\boldsymbol{\theta}_{n_{l,k}}^{k},\bbr_{n_{l,k},n}^{k+1}}+2\rho\ip{\boldsymbol{\theta}_{n_{l,k}}^{k+1}-\boldsymbol{\theta}_{n_{l,k}}^{k},\boldsymbol{\theta}_{n_{l,k}}^{k+1}-\boldsymbol{\theta}^\star}\Big)\nonumber
\\
&\quad\quad\quad\quad+\sum_{n\in {\cal N}_h}\Big(2\rho\ip{\boldsymbol{\theta}_{n_{r,k}}^{k+1}-\boldsymbol{\theta}_{n_{r,k}}^{k},\bbr_{n,n_{r,k}}^{k+1}}+2\rho\ip{\boldsymbol{\theta}_{n_{r,k}}^{k+1}-\boldsymbol{\theta}_{n_{r,k}}^{k},\boldsymbol{\theta}_{n_{r,k}}^{k+1}-\boldsymbol{\theta}^\star}\Big).
\end{align}
Using the equalities,  
\begin{align}
\boldsymbol{\theta}_{n_{l,k}}^{k+1}-\boldsymbol{\theta}^\star =& (\boldsymbol{\theta}_{n_{l,k}}^{k+1}-\boldsymbol{\theta}_{n_{l,k}}^k)+(\boldsymbol{\theta}_{n_{l,k}}^{k}-\boldsymbol{\theta}^\star), \forall n\in {\cal N}_h^k \setminus\{1\}\nonumber
\\
\boldsymbol{\theta}_{n_{r,k}}^{k+1}-\boldsymbol{\theta}^\star =& (\boldsymbol{\theta}_{n_{r,k}}^{k+1}-\boldsymbol{\theta}_{n_{r,k}}^k)+(\boldsymbol{\theta}_{n_{r,k}}^{k}-\boldsymbol{\theta}^\star), \forall n\in {\cal N}_h^k
\end{align} we rewrite the right hand side of \eqref{4211} as 
\begin{align} \label{new11}
&\!\!\!\!\sum_{n\in {\cal N}_h^k\setminus\{1\}}\Big(\!\!\!-2\rho\ip{\boldsymbol{\theta}_{n_{l,k}}^{k+1}-\boldsymbol{\theta}_{n_{l,k}}^{k},\bbr_{n_{l,k},n}^{k+1}}+2\rho\|\boldsymbol{\theta}_{n_{l,k}}^{k+1} -  \boldsymbol{\theta}_{n_{l,k}}^k\|^2+2\rho\ip{\boldsymbol{\theta}_{n_{l,k}}^{k+1}-\boldsymbol{\theta}_{n_{l,k}}^{k},\boldsymbol{\theta}_{n_{l,k}}^{k}-\boldsymbol{\theta}^\star}\Big) \nonumber
\\
&\quad+\sum_{n\in {\cal N}_h^k}\Big(2\rho\ip{\boldsymbol{\theta}_{n_{r,k}}^{k+1}-\boldsymbol{\theta}_{n_{r,k}}^{k},\bbr_{n,n_{r,k}}^{k+1}}+2\rho\|\boldsymbol{\theta}_{n_{r,k}}^{k+1} -  \boldsymbol{\theta}_{n_{r,k}}^k\|^2+2\rho\ip{\boldsymbol{\theta}_{n_{r,k}}^{k+1}-\boldsymbol{\theta}_{n_{r,k}}^{k},\boldsymbol{\theta}_{n_{r,k}}^{k}-\boldsymbol{\theta}_{n_{r,k}}^\star}\Big).
\end{align}
Further using the equalities
\begin{align}
\boldsymbol{\theta}_{n_{l,k}}^{k+1}-\boldsymbol{\theta}_{n_{l,k}}^k =& (\boldsymbol{\theta}_{n_{l,k}}^{k+1}-\boldsymbol{\theta}^\star)-(\boldsymbol{\theta}_{n_{l,k}}^{k}-\boldsymbol{\theta}^\star), \forall n\in {\cal N}_h^k \setminus\{1\}
\nonumber 
\\
\boldsymbol{\theta}_{n_{r,k}}^{k+1}-\boldsymbol{\theta}_{n_{r,k}}^k =& (\boldsymbol{\theta}_{n_{r,k}}^{k+1}-\boldsymbol{\theta}^\star)-(\boldsymbol{\theta}_{n_{r,k}}^{k}-\boldsymbol{\theta}^\star), \forall n\in {\cal N}_h^k,
\end{align}
we can write \eqref{new11} after the rearrangement as  
\begin{align}
&\sum_{n\in {\cal N}_h^k \setminus\{1\}}\Big(-2\rho\ip{\boldsymbol{\theta}_{n_{l,k}}^{k+1}-\boldsymbol{\theta}_{n_{l,k}}^{k},\bbr_{n-1,n}^{k+1}}+\rho \|\boldsymbol{\theta}_{n_{l,k}}^{k+1} -  \boldsymbol{\theta}_{n_{l,k}}^k\|^2+\rho\|(\boldsymbol{\theta}_{n_{l,k}}^{k+1} -  \boldsymbol{\theta}^\star)-(\boldsymbol{\theta}_{n_{l,k}}^{k} -  \boldsymbol{\theta}^\star)\|^2\nonumber
\\ 
&\quad\quad\quad+2\rho\ip{\boldsymbol{\theta}_{n_{l,k}}^{k+1}-\boldsymbol{\theta}^\star,\boldsymbol{\theta}_{n_{l,k}}^{k}-\boldsymbol{\theta}^\star}-2\rho\parallel \boldsymbol{\theta}_{n_{l,k}}^{k} -  \boldsymbol{\theta}^\star\parallel_2^2\Big)+\sum_{n\in {\cal N}_h^k}\Big( 2\rho\ip{\boldsymbol{\theta}_{n_{r,k}}^{k+1}-\boldsymbol{\theta}_{n_{r,k}}^{k},\bbr_{n,n_{r,k}}^{k+1}}\nonumber
\\
&\quad\quad\quad\quad+\rho\|\boldsymbol{\theta}_{n_{r,k}}^{k+1} -  \boldsymbol{\theta}_{n_{r,k}}^k\|^2+\rho\|(\boldsymbol{\theta}_{n_{r,k}}^{k+1} -  \boldsymbol{\theta}^\star)-(\boldsymbol{\theta}_{n_{r,k}}^{k} -  \boldsymbol{\theta}^\star)\|^2\nonumber 
\\
&\quad\quad\quad\quad\quad+2\rho\ip{\boldsymbol{\theta}_{n_{r,k}}^{k+1}-\boldsymbol{\theta}^\star,\boldsymbol{\theta}_{n_{r,k}}^{k}-\boldsymbol{\theta}^\star}-2\rho\|\boldsymbol{\theta}_{n_{r,k}}^{k} -  \boldsymbol{\theta}^\star|^2\Big).\label{here11}
\end{align}
Next, expanding the square terms in \eqref{here11}, we get the upper bound for the term in \eqref{term_1_ap11} as follows
\begin{align} \label{term_2_ap11}
&\sum_{n\in {\cal N}_h^k\setminus\{1\}}\Big(-2\rho\ip{\boldsymbol{\theta}_{n_{l,k}}^{k+1}-\boldsymbol{\theta}_{n_{l,k}}^{k},\bbr_{n-1,n}^{k+1}}+\rho\|\boldsymbol{\theta}_{n_{l,k}}^{k+1} -  \boldsymbol{\theta}_{n_{l,k}}^k\|^2+\rho\|\boldsymbol{\theta}_{n_{l,k}}^{k+1} \!-\!  \boldsymbol{\theta}^\star\|^2\!\!-\!\rho\| \boldsymbol{\theta}_{n_{l,k}}^{k} -  \boldsymbol{\theta}^\star\|^2\Big)\nonumber
\\
&+\sum_{n\in {\cal N}_h^k} \Big(2\rho\ip{\boldsymbol{\theta}_{n_{r,k}}^{k+1}-\boldsymbol{\theta}_{n_{r,k}}^{k},\bbr_{n,n_{r,k}}^{k+1}}+\rho\| \boldsymbol{\theta}_{n_{r,k}}^{k+1} -  \boldsymbol{\theta}_{n_{r,k}}^k\|^2+\rho\|\boldsymbol{\theta}_{n_{r,k}}^{k+1} -  \boldsymbol{\theta}^\star\|^2-\rho\|\boldsymbol{\theta}_{n_{r,k}}^{k} -  \boldsymbol{\theta}^\star\|^2\Big).
\end{align} 
Substituting the equalities from \eqref{term_1_ap11} and \eqref{term_2_ap11} to the left hand side of \eqref{eq7a11}, we obtain

\begin{align}\label{new_211}
\sum_{n=1}^{N-1}&\Big[-(1/\rho)\|\lamb_n^{k+1}-\lamb_n^\star\|^2+ (1/\rho)\|\lamb_n^{k}-\lamb_n^\star\|^2-\rho\|\bbr_{n,n_{r,k}}^{k+1}\|^2\Big]\nonumber
\\
&-\!\!\!\!\!\!\!\!\sum_{n\in {\cal N}_h^k\setminus\{1\}}\!\!\!\Big(\!\!\!-2\rho\ip{\boldsymbol{\theta}_{n_{l,k}}^{k+1}-\boldsymbol{\theta}_{n_{l,k}}^{k},\bbr_{n_{l,k},n}^{k+1}}\!+\!\rho\|\boldsymbol{\theta}_{n_{l,k}}^{k+1} -  \boldsymbol{\theta}_{n_{l,k}}^k\|^2
\!+\!\rho\|\boldsymbol{\theta}_{n_{l,k}}^{k+1} \!-\!  \boldsymbol{\theta}^\star\|^2\!\!-\!\rho\| \boldsymbol{\theta}_{n_{l,k}}^{k} -  \boldsymbol{\theta}^\star\|^2\Big)\nonumber
\\
&-\!\!\!\sum_{n\in {\cal N}_h^k}\Big(2\rho\ip{\boldsymbol{\theta}_{n_{r,k}}^{k+1}-\boldsymbol{\theta}_{n_{r,k}}^{k},\bbr_{n,n_{r,k}}^{k+1}}+\rho\|\boldsymbol{\theta}_{n_{r,k}}^{k+1} -  \boldsymbol{\theta}_{n_{r,k}}^k\|^2+\rho\|\boldsymbol{\theta}_{n_{r,k}}^{k+1} -  \boldsymbol{\theta}^\star\|^2-\rho\|\boldsymbol{\theta}_{n_{r,k}}^{k} -  \boldsymbol{\theta}^\star\|^2\Big)\nonumber 
\\
&\geq 0,
\end{align}
Next, consider the Lyapunov function $V_k$ as 
\begin{align}
V_k =1/\rho\sum_{n=1}^{N-1}\|\lamb_n^{k}-\lamb_n^\star\|^2 +\rho\sum_{n\in {\cal N}_h^k\setminus\{1\}}\|\boldsymbol{\theta}_{n_{l,k}}^{k} -  \boldsymbol{\theta}^\star\|^2+\rho\sum_{n\in {\cal N}_h^k}\|\boldsymbol{\theta}_{n_{l,k}}^{k} -  \boldsymbol{\theta}^\star\|^2.
\label{lyapEq11}
\end{align}
After rearranging the terms in \eqref{new_211} and using the definition of the Lyapunov function in \eqref{lyapEq11}, we get

\begin{align}
V_{k+1}\leq V_k-&
\sum_{n=1}^{N-1}\rho \|\bbr_{n,n_{r,k}}^{k+1}\|^2- \Big[\sum_{n\in {\cal N}_h^k\setminus\{1\}} \rho\| \boldsymbol{\theta}_{n_{l,k}}^{k+1} -  \boldsymbol{\theta}_{n_{l,k}}^k\|^2+\sum_{n\in {\cal N}_h^k} \rho\| \boldsymbol{\theta}_{n_{r,k}}^{k+1} -  \boldsymbol{\theta}_{n_{r,k}}^k\|^2\Big]\nonumber
\\
 - &\Big[\sum_{n\in {\cal N}_h^k\setminus\{1\}}-2\rho\ip{\boldsymbol{\theta}_{n_{l,k}}^{k+1}-\boldsymbol{\theta}_{n_{l,k}}^{k},\bbr_{n_{l,k},n}^{k+1}}+\sum_{n\in {\cal N}_h^k}2\rho\ip{\boldsymbol{\theta}_{n_{r,k}}^{k+1}-\boldsymbol{\theta}_{n_{r,k}}^{k},\bbr_{n,n_{r,k}}^{k+1}}\Big].
\label{mainIneq11}
\end{align}
We rewrite \eqref{mainIneq11} as
\begin{align}
V_{k+1}\leq V_k-&
\sum_{n\in {\cal N}_h^k\setminus\{1\}}\rho \|\bbr_{n_{l},n}^{k+1}\|^2+\sum_{n\in {\cal N}_h^k}\rho \|\bbr_{n,n_{r}}^{k+1}\|^2- \Big[\sum_{n\in {\cal N}_h^k\setminus\{1\}} \rho\| \boldsymbol{\theta}_{n_{l,k}}^{k+1} -  \boldsymbol{\theta}_{n_{l,k}}^k\|^2+\sum_{n\in {\cal N}_h^k} \rho\| \boldsymbol{\theta}_{n_{r,k}}^{k+1} -  \boldsymbol{\theta}_{n_{r,k}}^k\|^2\Big]\nonumber
\\
 - &\Big[\sum_{n\in {\cal N}_h^k\setminus\{1\}}-2\rho\ip{\boldsymbol{\theta}_{n_{l,k}}^{k+1}-\boldsymbol{\theta}_{n_{l,k}}^{k},\bbr_{n_{l,k},n}^{k+1}}+\sum_{n\in {\cal N}_h^k}2\rho\ip{\boldsymbol{\theta}_{n_{r,k}}^{k+1}-\boldsymbol{\theta}_{n_{r,k}}^{k},\bbr_{n,n_{r,k}}^{k+1}}\Big].
\label{mainIneq11_211}
\end{align}
Next, the equation in \eqref{mainIneq11_211} can be re-written as
\begin{align}
V_{k+1}\leq V_k-&
\rho\sum_{n\in {\cal N}_h^k\setminus\{1\}}\Big[ \|\bbr_{n_{l},n}^{k+1}\|^2-2\ip{\boldsymbol{\theta}_{n_{l,k}}^{k+1}-\boldsymbol{\theta}_{n_{l,k}}^{k},\bbr_{n_{l,k},n}^{k+1}}+\|\boldsymbol{\theta}_{n_{l,k}}^{k+1} -  \boldsymbol{\theta}_{n_{l,k}}^k\|^2\Big]\nonumber\\
&-\rho\sum_{n\in {\cal N}_h^k}\Big[\|\bbr_{n,n_{r}}^{k+1}\|^2+2\ip{\boldsymbol{\theta}_{n_{r,k}}^{k+1}-\boldsymbol{\theta}_{n_{r,k}}^{k},\bbr_{n,n_{r,k}}^{k+1}}+\|\boldsymbol{\theta}_{n_{r,k}}^{k+1} -  \boldsymbol{\theta}_{n_{r,k}}^k\|^2\Big]
\label{mainIneq11_2}
\end{align}
Further, we write \eqref{mainIneq11_2} as
\begin{align}
V_{k+1}\leq V_k-&
\rho\left(\sum_{n\in {\cal N}_h^k\setminus\{1\}}\|\bbr_{n_{l},n}^{k+1}-(\boldsymbol{\theta}_{n_{l,k}}^{k+1} -  \boldsymbol{\theta}_{n_{l,k}}^k)\|^2
+\!\!\!\!\!\sum_{n\in {\cal N}_h^k}\|\bbr_{n,n_{r}}^{k+1}+(\boldsymbol{\theta}_{n_{r,k}}^{k+1} -  \boldsymbol{\theta}_{n_{r,k}}^k)\|^2\right)
\label{mainIneq11_311}
\end{align}

The result in \eqref{mainIneq11_311} proves that $V_{k
+1}$ decreases in each iteration $k$. Now, since $V_k\geq 0$ and $V_k\leq V_0$, it holds that $\Bigg[\sum_{n\in {\cal N}_h^k\setminus\{1\}}\|\bbr_{n_{l},n}^{k+1}-(\boldsymbol{\theta}_{n_{l,k}}^{k+1} -  \boldsymbol{\theta}_{n_{l,k}}^k)\|^2
+\!\!\!\!\!\sum_{n\in {\cal N}_h^k}\|\bbr_{n,n_{r}}^{k+1}+(\boldsymbol{\theta}_{n_{r,k}}^{k+1} -  \boldsymbol{\theta}_{n_{r,k}}^k)\|^2\Bigg]$ is bounded. Taking the telescopic sum over $k$ in \eqref{mainIneq11_311} and taking limit $K\rightarrow\infty$, we get 
\begin{align}
\lim_{K\rightarrow\infty}\sum\limits_{k=0}^{K}\Bigg[\sum_{n\in {\cal N}_h^k\setminus\{1\}}\|\bbr_{n_{l},n}^{k+1}-(\boldsymbol{\theta}_{n_{l,k}}^{k+1} -  \boldsymbol{\theta}_{n_{l,k}}^k)\|^2
+\!\!\!\!\!\sum_{n\in {\cal N}_h^k}\|\bbr_{n,n_{r}}^{k+1}+(\boldsymbol{\theta}_{n_{r,k}}^{k+1} -  \boldsymbol{\theta}_{n_{r,k}}^k)\|^2\Bigg] \leq V_0.
\label{mainIneq311}
\end{align}
The result in \eqref{mainIneq311} implies that  the primal residual $\bbr_{n,n_{r,k}}^{k+1}\rightarrow \boldsymbol{0}$ as $k\rightarrow\infty$  for all $n \in \{1,\cdots, N-1\}$.  Similarly, the norm differences $\norm{ \boldsymbol{\theta}_{n_{l,k}}^{k+1} -  \boldsymbol{\theta}_{n_{l,k}}^k}$ and $\norm{ \boldsymbol{\theta}_{n_{r,k}}^{k+1} -  \boldsymbol{\theta}_{n_{r,k}}^k}\rightarrow\boldsymbol{0}$ as $k\rightarrow\infty$ which implies that the dual residual $\bbs_{n}^{k}\rightarrow\boldsymbol{0}$ as $k\rightarrow \infty$ for all $n\in {\cal N}_h^k$.  In order to prove the convergence to optimal point, , consider the lower and the upper bounds on the objective function optimality gap given by 
\begin{align}
\sum_{n=1}^N[f_n(\boldsymbol{\theta}_{n}^{k+1})-f_n(\boldsymbol{\theta}^\star)]&\leq -\sum_{n=1}^{N-1}\ip{\lamb_{n}^{k+1},\bbr_{n,n_{r,k}}^{k+1}}+\sum_{n\in {\cal N}_h^k}\ip{\bbs_{n}^{k+1},\boldsymbol{\theta}^\star-\boldsymbol{\theta}_n^{k+1}}
\label{lem1Eq2311}\\
\sum_{n=1}^N[f_n(\boldsymbol{\theta}_{n}^{k+1})-f_n(\boldsymbol{\theta}^\star)]&\geq -\sum_{n=1}^{N-1}\ip{\lamb_{n}^\star,\bbr_{n,n_{r,k}}^{k+1}}
	\label{eq3a311}.
\end{align}
Note that from the results established in this appendix, it holds that the right hand side of the upper bound in \eqref{lem1Eq2311} converge to zero as $k\rightarrow\infty$ and also the right hand side of the lower bound in \eqref{eq3a311} converges to zero as $k\rightarrow\infty$. This implies that 
\begin{align}
\lim_{k\rightarrow\infty}\sum_{n=1}^N[f_n(\boldsymbol{\theta}_{n}^{k+1})-f_n(\boldsymbol{\theta}^\star)] =0
\end{align}
which is the required result. Hence proved.

\bibliography{main}

\end{document}